\documentclass[10pt,conference]{IEEEtran}
\usepackage{cite}
\usepackage{amsmath,amssymb,amsfonts,amsthm,mathtools}
\usepackage[noend]{algorithmic}
\usepackage{graphicx}
\usepackage{textcomp}
\usepackage{xcolor}
\usepackage{hyperref}
\def\BibTeX{{\rm B\kern-.05em{\sc i\kern-.025em b}\kern-.08em
    T\kern-.1667em\lower.7ex\hbox{E}\kern-.125emX}}
\usepackage{ifthen}
\usepackage[normalem]{ulem} 
\usepackage{algorithm}

\DeclareMathOperator*{\maj}{\mathrm{Maj}}

\newtheorem{definition}{Definition}[section]
\newtheorem{theorem}{Theorem}[section]

\newtheorem{lemma}[theorem]{Lemma}

\newboolean{showedits}
\setboolean{showedits}{true} 
\ifthenelse{\boolean{showedits}}
{
	\newcommand{\del}[1]{\textcolor{red}{\sout{#1}}} 
}{
	\newcommand{\del}[1]{} 
	
}

\newboolean{showcomments}
\setboolean{showcomments}{true} 
\newcommand{\id}[1]{$-$Id: scgPaper.tex 32478 2010-04-29 09:11:32Z oscar $-$}

\ifthenelse{\boolean{showcomments}}
{\newcommand{\nbc}[3]{
 {\colorbox{#3}{\bfseries\sffamily\scriptsize\textcolor{white}{#1}}}
 {\textcolor{#3}{\sf\small$\blacktriangleright$\textit{#2}$\blacktriangleleft$}}}
 }
{\newcommand{\nbc}[3]{}
 \renewcommand{\del}[1]{} 
 }

\definecolor{ibcolor}{rgb}{0.9,0.5,0}
\definecolor{accolor}{rgb}{0,0.5,0.9}
\definecolor{zjcolor}{rgb}{0,0.2,0.1}
\definecolor{smcolor}{rgb}{1.0,0.1,0.1}
\definecolor{tdcolor}{rgb}{1.0,0,0}

\begin{document}

\title{Fairness-guided SMT-based Rectification of Decision Trees and Random Forests}

\author{\IEEEauthorblockN{Jiang Zhang}
\IEEEauthorblockA{
\textit{National University of Singapore}\\
zhangj@comp.nus.edu.sg}
\and
\IEEEauthorblockN{Ivan Beschastnikh}
\IEEEauthorblockA{
\textit{University of British Columbia}\\
bestchai@cs.ubc.ca}\\
\IEEEauthorblockN{Abhik Roychoudhury}
\IEEEauthorblockA{
\textit{National University of Singapore}\\
abhik@comp.nus.edu.sg}
\and
\IEEEauthorblockN{Sergey Mechtaev}
\IEEEauthorblockA{
\textit{University College London}\\
s.mechtaev@ucl.ac.uk}
}

\maketitle
\thispagestyle{plain}
\pagestyle{plain}

\begin{abstract}
Data-driven decision making is gaining prominence with the popularity of various machine learning models. Unfortunately, real-life data used in machine learning training may capture human biases, and as a result the learned models may lead to unfair decision making. In this paper, we provide a solution to this problem for decision trees and random forests. Our approach converts any decision tree or random forest into a fair one with respect to a specific data set, fairness criteria, and sensitive attributes. The \emph{FairRepair} tool, built based on our approach, is inspired by automated program repair techniques for traditional programs. It uses an SMT solver to decide which paths in the decision tree could have their outcomes flipped to improve the fairness of the model. Our experiments on the well-known adult dataset from UC Irvine demonstrate that FairRepair scales to realistic decision trees and random forests. Furthermore, FairRepair provides formal guarantees about soundness and completeness of finding a repair. Since our fairness-guided repair technique repairs decision trees and random forests obtained from a given (unfair) data-set, it can help to identify and rectify biases in decision-making in an organisation.
\end{abstract}


\section{Introduction}

Due to the advent of data-driven decision-making, many important societal decisions are increasingly being made by programs representing learning models.  It is increasingly common to have decision making for resource allocation, such as approval of bank loans, to be conducted by programs representing learning models. Such programs are constructed based on training data. So, for approval of loans, the training data contain relevant information about past loan applications and approval decisions. Unfortunately, past decisions could be influenced by human biases. Therefore, the training data could be biased, and the decision making programs obtained from such data could be unfair.

Testing, analysis, and verification of decision-making programs in relation to fairness properties have been studied in the last five years. However, previous work has not studied repairing decision trees and random forests which may be unfair. We focus on these two models, since they are commonly used for capturing decision making in software systems in various fields including industrial operations research.
The notion of fairness that we adopt in this paper combines the \emph{group fairness} criteria used by Galhotra et al. \cite{FairnessTestingESECFSE2017} and Albarghouthi et al. \cite{CAV17}. In simple words, group fairness requires the ratio of the probability of a minority group member getting a resource to the probability of a majority group member getting the resource, to be bounded from both below and above.  


\begin{figure}
    \centering
    \includegraphics[width=\columnwidth]{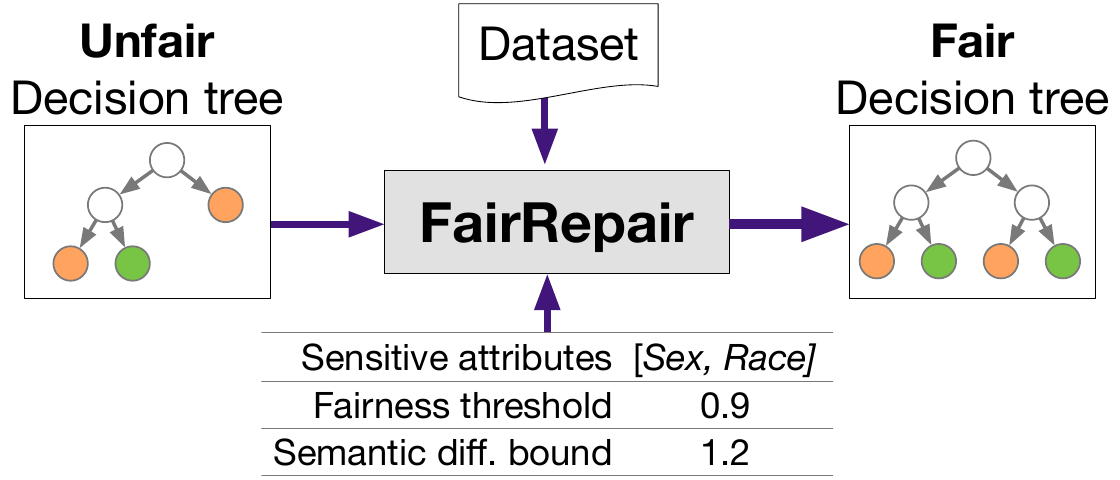}
    \caption{Our technique repairs an unfair decision tree into a fair one relative to a dataset and fairness requirements.
    \label{fig:fair-repair}}
\end{figure}{}

In this paper, we study the task of automatically repairing decision-making programs, possibly derived from unfair training data, to achieve fairness. To the best of our knowledge, the only existing approach for fairness guided repair is DIGITS \cite{CAV17}. DIGITS relies on probability distributions to represent inputs, and to give (probabilistic) guarantees of completeness. In practice, it is non-trivial to compute input distributions. Unlike DIGITS, our technique relies on a set of input-output pairs.  We assume that a (possibly unfair) dataset of past human-made decisions is available. Instead of directly returning a classifier trained on this dataset, our technique generates a fair classifier by repairing an unfair classifier (Figure~\ref{fig:fair-repair}). 

Our \emph{FairRepair} approach is intended to support different use-cases than a tool like DIGITS \cite{CAV17}. The DIGITS approach can potentially help in formulating high level policies for fair decision making in a population based on a \emph{known} population distribution. In contrast, FairRepair can identify and rectify unfairness in decision making based on past decisions' data; the population distribution may be unknown. We elaborate on the distinction between the two methodologies in Section \ref{related}. The output from FairRepair can be used in at least two ways: (1) the repairs may help the users understand implicit or unconscious biases in manual decisions, and can explain how to improve organizational decision-making workflows, or (2) FairRepair can help to migrate from manual decision making to a fair AI-based process by training classifiers on historical decision data and correcting them using fairness requirements.

\paragraph*{Contributions}
The main contribution of this paper is a solution to fairness-guided automated repair of decision trees and random forests. Our approach automatically repairs a given unfair decision tree (or random forest), and minimizes the semantic difference between the original and the repaired tree (or forest) by relying on a partial MaxSMT solver. Our technique is \textit{sound} and \textit{complete}, i.e., it will find a repair if one exists and the repair is guaranteed to satisfy the fairness semantic difference requirements. 

We evaluate FairRepair on two well-known publicly available datasets (German data-set and  Adult data-set from UC Irvine). We show that our implementation achieves the claimed fairness guarantees for different fairness thresholds and sensitive attributes: (1) FairRepair outputs decision trees that satisfy group fairness, and (2) FairRepair adheres to a bound on the number of input points whose classification result differs in the output decision tree or random forest model. We investigate the classification accuracy of our repaired models, and observe that our repairs preserve the accuracy of the original model. 
We also study FairRepair's scalability and find that it is able to find a repair for decision trees on a dataset with 48,842 data points in under 5 minutes (average time). 
We make our tool, {\em FairRepair}\footnote{\url{https://github.com/fairrepair/fair-repair}}, available online, and we plan to make it open-source.

\section{Overview}

Consider the problem of approving loan applications. The decision is either $\mathtt{TRUE}$ (approve application) or $\mathtt{FALSE}$ (reject application). Suppose we know that the existing decision-making program is unfair (possibly due to being derived from an unfair data-set) for some pre-defined fairness requirements. Our goal is to produce a repair that is also a decision-making program, but one that is fair and with bounded semantic difference from the unfair program.

Let $D$ be a dataset of applications which includes three attributes: an applicant's sex, education level, and age, which are {\em discrete} attributes. Examples of possible attribute values are listed below:
\begin{center}
\begin{tabular}{ |c|c|c| } 
 \hline
 \textit{\textbf{sex}} & education & \textit{age} \\ 
 \hline
 \textit{male, female} & \textit{low, med, high} & $\{1,\dots, 99, 100^+\}$  \\
 \hline
\end{tabular}
\end{center}

Let \textit{sex} be a \emph{sensitive} attribute.
Each sensitive attribute partitions the input space into disjoint subsets, which we call the \textit{sensitive groups}. Continuous attributes could also be sensitive, which would create partitions in the form of disjoint intervals over the value range. For multiple sensitive attributes, sensitive groups are the cross product of the partitions for each sensitive attribute. In our example, fairness requirements are probabilistic inequalities of proportions of applicants in a sensitive group that receive loans. We call these proportions \textit{passing rates}. In our example, the fairness requirements are defined as follows:
\[
\frac{\mathbb{P}(\text{get loan}|\text{female})}{\mathbb{P}(\text{get loan}|\text{male})}>0.8 \text{ and } \frac{\mathbb{P}(\text{get loan}|\text{male})}{\mathbb{P}(\text{get loan}|\text{female})}>0.8.
\]

The above inequalities restrict the ratios of the proportions of applicants receiving loans among males and females to be bounded by a fairness threshold $c$, 0.8. In general, fairness thresholds for each inequality are not required to be the same. For illustration purpose, we will use 0.8 for both inequalities.

The input distribution on the non-sensitive attributes may vary among different sensitive groups. Let $\mathit{edu_1}$ and $\mathit{edu_2}$ be two distinct valuations of education. Suppose 80\% of male applicants have $\mathit{edu_1}$ and 80\% of female applicants have $\mathit{edu_2}$. If the decision making procedure uses only education to decide whom to give loans, to meet the fairness requirements, either both groups of applicants receive loans, or both groups do not receive loans. In general, when the decision making process only uses the non-sensitive attributes, the resulting program may experience over-fitting. Hence, to achieve group fairness, it is reasonable to set disparate decision making schemes for each sensitive group, in order to bound the ratios of the passing rates. We next explain how we repair an unfair decision tree.

\paragraph{Pre-processing}
Let $T$ be an unfair decision tree trained using a dataset $D$. The first step of our approach is to collect paths with respect to the sensitive groups. Each path is interpreted as a region (or a \textit{hypercube}) in the input space with a coloring (its return value). In our example, there are two colorings, $\mathtt{TRUE}$ and $\mathtt{FALSE}$.
To avoid ambiguity, we shall use the phrase \textit{path hypercube} throughout the paper.  An example path is a constraint of the form $\text{(sex=female)}\wedge\text{(age}>\text{50)}\wedge\text{(edu=high)}\to\mathtt{TRUE}$. It can be naturally represented as a hypercube: $\{\text{female}\}\times (50,\infty)\times\{\text{high}\}$. 
To account for sensitive attributes explicitly, we collect the intersections of these path hypercubes with the sensitive groups. Each path hypercube will be split into disjoint subsets with the same coloring inherited. The constraints of a path hypercube may or may not contain sensitive attributes. The above-mentioned path hypercube will have an empty intersection with sensitive group \textit{male}. All such subsets of path hypercubes will be collected based on their sensitive groups, and form two sets, $S_m$ for male and $S_f$ for female. 

\paragraph{Calculating Lower Bound of Minimal Change}
For each path hypercube $i$, we denote its return value as $I_i$, and calculate its path probability $p_i$ (by counting the number datapoints residing in it), and passing rate $r_i$ by considering the data points that receive the desired outcome (getting a loan). Similarly, we compute the proportions of each sensitive group in the entire population. We use linear optimisation to calculate the minimum change in these proportions to satisfy the fairness requirement. Let the path probability and passing rate for male (resp. female) be $p_{\textit{m}}$ and $r_{\textit{m}}$ (resp $p_{\textit{f}}$ and $r_{\textit{f}}$). Let $x_m$ and $x_f$ be \emph{real} variables for passing rates of male and female. We are to solve
\[
\begin{cases}
\dfrac{x_m}{x_f}\geq 0.8, \dfrac{x_f}{x_m} \geq 0.8 \text{ and,}\\
\text{minimise } p_m\cdot|x_m-r_m|+p_f\cdot|x_f-r_f|.
\end{cases}
\]
These minimum changes are used to calculate the theoretical lower bound of semantic difference $\mathit{sd}$, i.e., the proportion of the population receiving different outcomes before and after the change. However, the lower bound might not always be achievable in practice. For \emph{real} solutions of $x_m$ and $x_f$, the minimal changes computed in the numbers of data points for males and females may be non-integers, but these numbers must be integers as we are counting people. Nonetheless, $sd$ serves as a good starting point for our search for a repair.

\paragraph{Calculating Patches with an SMT Solver}
With the minimal theoretical semantic difference $\mathit{sd}$, we encode the fairness requirements as SMT formulas, and encode the return values of the path hypercube subsets in $S_m$ and $S_f$ as variables. We first check if it is possible to repair $T$ without modifying its tree structure, i.e., by only flipping the return values of some path hypercubes. We assign a pseudo-Boolean variable $X_i$ for each path hypercube. For our example, the fairness requirements are encoded as the following hard constraints:
\[
\begin{cases}
\sum\limits_{i\in S_m}X_i\cdot p_i / p_m \geq 0.8 \sum\limits_{i\in S_f}X_i\cdot p_i / p_f \text{ and,}\\
\sum\limits_{i\in S_f}X_i\cdot p_i / p_f \geq 0.8 \sum\limits_{i\in S_m}X_i\cdot p_i/p_m.\\
\end{cases}
\]
Since $sd$ might not be achieved, yet we want the repair to have a small semantic difference, we adopt a multiplicative factor $\alpha$ as follows ($I_i$ is the return value of path hypercube $i$):
\[
\sum\limits_{i \in S_m}p_i\cdot\mathrm{Xor}(X_i ,I_i) + \sum\limits_{j \in S_m}p_i\cdot\mathrm{Xor}(X_i,I_j) \leq \alpha \cdot\mathit{sd}.
\]
The left hand side is the actual semantic difference, and $\alpha$ is used to bound its difference with $sd$. If the solver returns $\mathtt{UNSAT}$, we refine the path hypercubes by splitting them on an attribute value, and re-run the query. We refine repeatedly until we find a repair.

\section{Preliminaries}
\label{subsec: defs}

We now formally define the \textit{fairness repair} problem.

\paragraph{Attributes} An attribute $A$ is a label that takes some value(s) from its attribute space. An attribute can be discrete or continuous. If $A$ is discrete, the attribute space of $A$ is a finite unordered discrete space. Examples of discrete attributes are gender, education level, and occupation. If $A$ is continuous, the attribute space of $A$ is a one-dimensional real space, or $\mathbb{R}$. Examples of continuous attributes are height and salary. For simplicity, we use $A$ to denote both the attribute and its attribute space when no ambiguity arises.

\paragraph{Input Space and Input Distribution} An input space $S$ is the Cartesian product of one or more attribute spaces. We call an input space $n$-dimensional if its total number of attribute spaces is $n$. The input distribution $p$ is a probability density function defined on the input space. It is the distribution of the \textit{population}. An input space is not necessarily associated with an input distribution. An element in the input space is called a \textit{candidate}. 

\paragraph{Data Set} A data set $D$ is a finite set of data points, associated with an $n$-dimensional input space $S$. Each data point contains an $n$-tuple $(a_1,a_2,\dots,a_n) \in S$ and an outcome $b$. The outcome $b$ is a boolean value, i.e., $\mathtt{TRUE}$ or $\mathtt{FALSE}$. If the input space is not associated with an input distribution, we assume the data set represents the input distribution perfectly. The frequencies of $n$-tuples in the data set are interpreted to be the density function $p$.

\paragraph{Sensitive Attributes and Sensitive Groups} Each input space has a set of attributes. Some attributes are considered \textit{sensitive}, e.g., sex and race. A sensitive attribute's attribute space is partitioned into a finite number of disjoint subsets. For discrete attributes, $A$ is partitioned into $|A|$ subsets, where each partition contains exactly one element of $A$. For continuous attributes, $A$ is partitioned based on some thresholds. These thresholds divides $A$ into finitely many intervals whose union is $A$. Sensitive groups $S_i$'s are subsets of the input space, partitioned by combinations of different values of sensitive attributes. For sensitive attributes $\{A_1,\dots,A_m\}$, the input space is partitioned into $M:=\prod_m|A_i|$ sensitive groups. 

\paragraph{Decision Making Program and Passing Rates} A decision making program $P$ is defined on an input space $S$ and a data set $D$, such that $\forall x \in D. P(x)\in\{\mathtt{TRUE},\mathtt{FALSE}\}$. For program $P$, passing rate $r$ of a sensitive group $S_i$ is the probability of a candidate receiving a pre-defined outcome conditioned on being in the sensitive group $S_i$. If not specified, the pre-defined outcome is $\mathtt{TRUE}$. Formally, $r:=\mathbb{P}(P(x) = \mathtt{B}|x\in S_i)$, where $\mathtt{B} = \mathtt{TRUE}$.

\paragraph{Group Fairness} For an input space and a decision making program, if for every two sensitive groups, the difference in their passing rates is bounded by a multiplicative factor, $0<c <1$, then we say that the decision making program attains group fairness. This definition is adapted from \cite{FairnessTestingESECFSE2017}. Formally, we require 
\[
\forall 1\leq i,j \leq M,  c\,r_i \leq r_j \leq \frac{r_i}{c} .
\]

\paragraph{Semantic Difference} For an input space $S$ and two decision making programs $\mathit{Prog}_1$, $\mathit{Prog}_2$ defined on $S$, their semantic difference $SD(\mathit{Prog}_1, \mathit{Prog}_2)$ is defined as the proportion in the population that receives different outcomes in the two programs. Formally, 
\[
SD(\mathit{Prog_1},\mathit{Prog}) :=\mathbb{P}(\mathit{Prog_1}(x) \neq \mathit{Prog_2}(x) | x \in S)
\]

\begin{definition}[Fairness Repair Problem]
Given an input space $\emph{S}$ and a decision making program $\emph{Prog}$ that does not satisfy group fairness, a $\emph{repair}$ is a decision making program that attains group fairness. An $\emph{optimal repair}$ is a repair that minimises the semantic difference between itself and $\emph{Prog}$. 

To solve a $\emph{fairness repair problem}$ is to find a repair $R$ with a small semantic difference with $\emph{Prog}$, i.e., if there exists an optimal repair $R_{op}$, $R$ should be bounded by a multiplicative factor $\alpha >1$ compared to that of the optimal repair. Formally, $SD(R, \emph{Prog}) \leq \alpha\cdot SD(R_{op}, \emph{Prog})$.
\end{definition}


\paragraph{Optimal Repair}
\label{sec: optimal}
We now discuss the existence of a theoretical optimal solution for decision tree repair. In this context, an \textit{optimal solution} is a decision tree $T'$ that satisfies the fairness requirement and has a minimal semantic difference compared to the original decision tree $T$. 

We shall still use $M$ to denote the number of sensitive groups. These sensitive groups are shared by $T$ and $T'$. Let $T(d)$ denote the predicted outcome for $d\in D$. We abuse the notation to let $S_i, 1\leq i \leq M$ also denote the set of path hypercubes in the sensitive group $S_i$. Let $P_{i,j}$'s and $p_{i,j}$'s denote the path hypercubes and path probabilities, where $1\leq i \leq M$, $1\leq j \leq |S_i|$. For a path hypercube $P_{i,j}$, we use $X_{i,j}$ to denote desired return value (variable) of the path hypercube. Each $X_{i,j}$ is a pseudo-Boolean variable. Let $\mathcal{M}:D\to \{X_{i,j}:1\leq i \leq M,1\leq j \leq |S_i|\}$ be the function that maps the datapoint to the pseudo-Boolean variable which corresponds to the path hypercube that the point resides in. Since the data set (by assumption) correctly represents the input distribution, if there exists a mapping of data points to return values, and it satisfies the fairness requirement, then such a mapping is by definition a repair. In addition, we would like to minimise the semantic difference. This is equivalent to minimise the number of data points that have different return values before and after the repair, and can be formulated as SMT formulas and directly solved by the solver, as follows:
\[
\begin{cases}
\forall\, 1\leq i, j\leq M, k, l, \frac{\sum\limits_{k=1}^{|S_i|}p_{i,k}\cdot X_{i,k}}{\sum\limits_{l=1}^{|S_j|}p_{j,l}\cdot X_{j,l}} \geq c,\\
\text{minimise } \sum\limits_{d\in D} \mathrm{Xor}(X_{i,j},T(d)).
\end{cases}
\]
The first line is the fairness requirement. For any two sensitive groups, the difference in their passing rates should be bounded by a multiplicative factor. The second line captures the semantic difference requirement:
minimise the number of data points that have a different classification outcome after the repair.
This set of formulas always has a solution, as the fairness requirement constraint has a trivial solution of $X_{i,j} = 1$ for all $i,j$. Since the solution space is non-empty, an optimal solution exists. We omit the definition of optimal solution for a random forest as it can be derived by generalising the above.




\section{FairRepair for Decision Tress}
\label{main algo}

The goal of our approach is to output a decision tree or a random forest, such that it satisfies the fairness requirement, and when compared to the optimal solution, the semantic difference with the output tree or forest is bounded by a multiplicative coefficient $\alpha$. Our approach will be able to output a solution for any $\alpha>1$. 



\subsection{Top-level Algorithm}

Fairness criteria are inequalities over path probabilities.  
A probability triple is $(\Omega, \mathcal{F}, P)$, where $\Omega$ is the sample space, $\mathcal{F}$ is the event space and $P$ is the probability function. As defined in Section~\ref{subsec: defs},  $\Omega$ is the Cartesian product of discrete spaces (for discrete attributes) and 1-dimensional continuous spaces (defined on $\mathbb{R}$, for continuous attributes). 

Algorithm \ref{alg:main} outlines the repair algorithm. We first collect all decision tree {\em path hypercubes}, and group them based on their sensitive attributes. Each {\em path hypercube} represents a region in the input space that is specified by its path constraints. The next step is to calculate the path probabilities, which are defined to be the proportion of inputs that enter the path hypercube. We assume that each path hypercube has a 1/0 outcome, depending on whether the resource is allocated. We specify 1 as the desired return value when we compute passing rates, which are part of the fairness inequalities as described in Section~\ref{subsec: defs}. In Algorithm~\ref{alg:main}, we use linear optimisation to calculate how much we want to change the {\em passing rates} to meet the fairness requirement. The constraints are sent to a partial MaxSMT solver. For some path hypercubes, we flip the outcomes to meet the fairness requirement. For some path hypercubes, we refine them (by inserting additional conditions) to meet the desired passing rates. The SMT procedure terminates when a solution is found. Finally, we modify the decision tree based on the solution from the SMT solver.

\begin{algorithm}
\caption{Top-level algorithm in {\sf FairRepair} for Decision Trees\label{alg:main}}
\begin{algorithmic}
\renewcommand{\algorithmicrequire}{\textbf{Input:}}
\renewcommand{\algorithmicensure}{\textbf{Output:}}
\REQUIRE $D,D_0 = \text{datasets}$, $c =\text{fairness threshold}$.
\ENSURE Solution for Repaired Tree.

    \STATE \textbf{train} $T_0$ on $D_0$  \COMMENT{$T_0$ is a decision tree trained on $D_0$.}
    \STATE T $\leftarrow$ \textbf{Collectpath hypercubes}($T_0$) \COMMENT{Decisions on sensitive attributes.}
    \STATE $[H]$ $\leftarrow$ \textbf{Tree2HCubes}($T$) \COMMENT{Path constraints.}
    \STATE $[P]$ $\leftarrow$ \textbf{PathRateCalculator}($D$, $H$)
    \COMMENT{Path probabilities.}
    \STATE $[R]$ $\leftarrow$ $\textbf{LinearOpt}([P])$ \COMMENT{Desired passing rates.}

    \WHILE{isRefinable([H]) == $\mathtt{TRUE}$} 
        \IF{\textbf{MaxSMT}([H], [P], [R]) == \texttt{UNSAT}} 
        \STATE \textbf{Refine}([H]) \COMMENT{Refine path hypercubes.}
        \ELSE{\RETURN \textbf{MaxSMT}([H], [P], [R])} 
        \ENDIF
    \ENDWHILE
    \WHILE{$\mathtt{TRUE}$}
        \STATE\textbf{RelaxSemDiffConstraint()}
        \IF{\textbf{MaxSMT}([H], [P], [R]) == \texttt{SAT}}{\RETURN\textbf{MaxSMT}([H], [P], [R])}
        \ENDIF
    \ENDWHILE

\end{algorithmic}
\end{algorithm}

\subsection{Collecting Decisions on Sensitive Attributes}
\label{subsec: sens collect}


Given a dataset $D_0$, we first train a decision tree $T$ on $D_0$ with an external tool. The algorithm starts by collecting the path hypercubes and grouping them based on the sensitive attributes in their path constraints. Let $\{A_1, \dots, A_m\}$ be the set of sensitive attributes. There are in total $M:= |A_1| \times |A_2| \times \cdots \times |A_m|$ sensitive groups, where each $|A_i|$ is the number of partitions of the valuations of $A_i$. For each sensitive group, create an empty set $S_i$.  We use $t_i$, $1\leq i\leq M$, to denote the sensitive attribute values of a sensitive group. This does not necessarily require the sensitive attributes to have discrete domains, we only need a discrete partitioning of the set of valuations of any sensitive attribute. 

A path hypercube of the decision tree $T$ corresponds to a region in the input space. Similarly, each sensitive group is also a region in the input space. For each path hypercube, its intersection with the sensitive group $t_i$ will be recorded to the corresponding set $S_i$. Thus, each set $S_i$, $1 \leq i \leq M$, contains all path hypercubes that an input with sensitive attributes $(t_i)$ could possibly traverse. Modifying the outcome of a path hypercube in set $S_i$ has no effect on the path hypercubes in other sets, as the regions in the input space that these sets cover are mutually disjoint. 


\subsection{Path Probability Calculation}

To conduct the repair, we first evaluate the tree with respect to the fairness requirements defined in Section~\ref{subsec: defs}. 
In particular,  $\forall 1 \leq i, j \leq M$, $r_i/r_j > c$, i.e., the passing rates of any two sensitive groups should be similar (up to threshold $c$). Let $D$ be a dataset. Note that we do not require $D=D_0$. We assume that the data set represents the input distribution, and define path probability to be the number of data points that satisfy the path. Each $r_i$ is then obtained by adding up the path probabilities of those path hypercubes returning the required outcome in set $S_i$. 
Also, we define the quantity $p_i = \sum_{\pi \in S_i} prob(\pi)$
where $prob(\pi)$ is the path probability of path $\pi$.
Thus, for each path hypercube $\pi$ in $S_i$, we count the number of data points in $\pi$. This number divided by the size of the data set is the path probability $prob(\pi)$. Using these path probabilities,  we can calculate the passing rates $r_i$ and proportions $p_i$. In our example, $r_\text{male}$ is the proportion of male applicants getting a loan. This value is the sum of the path probabilities of all path hypercubes in $S_\mathtt{male}$ that returns $\mathtt{TRUE}$. If we sum up all path probabilities in $S_\mathtt{male}$, we obtain the probability of $p_\text{male} = \mathbb{P}(\text{sex = male})$.

\subsection{Calculating theoretical optimal passing rates}
\label{lo computation}
We now have the passing rates $r_i$ for each set $S_i$, $1\leq i \leq M$, i.e., each pair of sensitive attributes. Recall the fairness requirement: $\forall 1 \leq i, j \leq M$, $\dfrac{r_i}{r_j} > c$, where $r_i$ and $r_j$ are passing rates of subtrees. To meet this requirement, we need to modify the decision tree such that for each $1\leq i \leq M$, the passing rate of $T_i$ changes from $r_i$ to $x_i$.

In Section~\ref{subsec: defs} we directly solved for the return values of each data points that achieves an optimal repair. By contrast, here we only compute the optimal passing rate change constrained by the fairness requirements.
Additionally, we aim to find a solution with the minimal semantic difference from the original, unfair, decision tree.  
To find the $x_i$s, we solve the following linear optimization problem:

\[
\begin{cases} 
\forall \,1 \leq i,j\leq M, \dfrac{x_i }{x_j} \geq c , \\
\forall \,1 \leq i\leq M, 0 \leq x_i \leq 1, \\
\text{minimise } \sum\limits_{i=1}^{M}p_i\cdot|x_i-r_i|.
\end{cases}
\]

 \noindent The first line is the fairness requirement. Since passing rates are probabilities, the second line requires them to be bounded by 0 and 1. The third line accounts for the semantic difference. Each $p_i\cdot|x_i-r_i|$ is a lower bound for the proportion of data points being affected in set $S_i$. It is possible that the passing rates $x_i$ and $r_i$ corresponds to different groups of data points, and the actual number of data points being affected is not $|x_i -r_i|$, but $x_i + r_i$. Thus, $p_i\cdot|x_i - r_i|$ is only a lower bound on the proportion of data points affected in $S_i$.
 
 In our example, suppose for male applicants, initially only those who have high education receive a loan. When this is changed to only those who have medium education receive a loan, the difference in passing rate is only $|\mathbb{P}(\text{male, high}) - \mathbb{P}(\text{male, med})|$, but the proportion of data points being affected is $\mathbb{P}(\text{male, high})+ \mathbb{P}(\text{male, med})$. 

The optimisation problem always has a solution, giving us the ``desired passing rates" $x_1, x_2, \ldots, x_M$ which are used to modify the path hypercubes in the decision tree, as discussed in the following.



\subsection{Calculating patches with MaxSMT}
\label{subsec:maxsmt}

Our goal is to output a decision tree, such that when compared to the optimal solution, the semantic difference of our tree is bounded by a multiplicative coefficient $\alpha$. This can be achieved by using a partial MaxSMT solver. Let $P_{i,j}$'s and $p_{i,j}$'s denote the path hypercubes and path probabilities of set $S_i$. Let $I_{i,j}$ be a pseudo-Boolean value that indicates if the return value of $P_{i,j}$ is $\mathtt{TRUE}$ or $\mathtt{FALSE}$. For each $I_{i,j}$, let $X_{i,j}$ be a pseudo-Boolean SMT variable. The solver will solve for these $X_{i,j}$'s. The idea is that, to meet minimum semantic difference, most $X_{i,j}$ should be the same as $I_{i,j}$, but the solver could produce a few $X_{i,j}$ which are different from $I_{i,j}$; these are the path hypercubes in the decision tree where the classification outcome of the path is flipped. 
We now discuss the partial MaxSMT problem formulation which produces the $X_{i,j}$ values. Our MaxSMT problem has hard constraints, which must be met, and soft constraints the maximal number of which should be met.

\paragraph{Hard Constraints}
The first set of hard constraints are the fairness requirements. For every pair of subtrees, the ratio of their passing rates must be bounded. Recall that $S_i$ is the set of path hypercubes of sensitive group $i$.
\[
\forall\, 1\leq i, j\leq M, k, l, \frac{\sum\limits_{k=1}^{|S_i|}p_{i,k}\cdot X_{i,k}}{\sum\limits_{l=1}^{|S_j|}p_{j,l}\cdot X_{j,l}} \geq c.
\]
We account for semantic difference in the hard constraints.
\[
\sum\limits_{i = 1}^{M}\sum\limits_{j = 1}^{|S_i|} \,p_{i,j}\cdot \mathrm{Xor}(X_{i,j},I_{i,j})\leq \alpha\sum\limits_{i = 1}^{M}p_i\cdot|x_i-r_i|.
\]
The left hand side is the semantic difference between our modified tree and the original tree. The right hand side is the minimal possible semantic difference, multiplied by $\alpha$. 

\paragraph{Soft Constraints} The soft constraints put $X_{i,j}$ to be same as $I_{i,j}$. The solver should satisfy the maximal number of these soft constraints, which means that we should flip the return value of as few path hypercubes as possible.
\[
\forall\, i, j,\, X_{i,j} = I_{i,j}.
\]
The solver returns $\mathtt{UNSAT}$ whenever our semantic difference is too large. 
Note that the above set of formulas are not guaranteed to have a solution, because the path probabilities might be too large. If the solver returns $\mathtt{UNSAT}$ for the partial maxSMT problem, then we cannot make the decision tree fair by flipping path hypercube return values. In that case, our algorithm moves on to refine the path hypercubes, a process we describe next.

\subsection{Refining Path Hypercubes}
\label{subsec: refine}

If the fairness criteria cannot be satisfied by flipping the outcome of the path hypercubes (i.e., the hard constraints are unsatisfiable), we proceed to refine the path hypercubes. In particular, we split one path hypercube into two path hypercubes based on a single attribute. There is no restriction whether this attribute should be discrete or continuous. When the solver outputs $\mathtt{UNSAT}$, we refine a single path hypercube and re-run the solver. Since the total number of path hypercubes is bounded by the size of the dataset, this procedure always terminates after a finite number of steps.

Refinement requires choosing a path hypercube and a hypercube constraint on an attribute. FairRepair delegates this process to the SMT solver. The changes in passing rates for all possible constraints to refine the hypercube are pre-calculated. Then, we iteratively refine path hypercubes by encoding each refinement as an optimisation problem: at each iteration we select a refinement that maximally improves fairness. After a refinement, we check if the new decision tree meets the fairness criteria. If not, we refine again. To decide on a constraint to refine, we consider the following two cases: attributes with discrete domains and attributes with ordered domains.

\paragraph*{Discrete attributes} For attributes with discrete domains, such as boolean attributes, we consider splitting path hypercubes by choosing arbitrary subsets of values. Specifically, let $P_{i,j}$ be a path hypercube to refine, and let $A$ be an attribute, and $\mathit{Dom}(A) \coloneqq \{a_1, a_2, ..., a_n\}$ be the domain of $A$. Our goal is to divide the domain $\mathit{Dom}(A)$ into two disjoint sets $\mathcal{A}_T$ and $\mathcal{A}_F$ that would corresponds to the refinements of the path hypercube $P_{i,j}$ into $P_{i,j}^T$ and $P_{i,j}^F$, so that the resulting decision tree returns $\mathit{True}$ for all inputs with the value of $A$ in $\mathcal{A}_T$, and returns $\mathit{False}$ for all inputs with $A$ values in $\mathcal{A}_F$.
To realize this, we encode path probabilities for all subsets of attribute values. For each value $a_l$ in $\mathit{Dom}(A)$ we introduce a boolean variable $Y_{i,j,l}$ that indicates if this value should be in the set $\mathcal{A}_T$ (i.e., if the decision for $a_i$ should be $\mathit{True}$). We also assume that $p(P_{i,j}, a_l)$ indicates the passing rate for inputs in $P_{i,j}$ that have the attribute $a_l$. Then, we formulate the refinement problem as the following SMT query:
\[
\forall\, 1\leq i, j\leq M, k, l, \frac{\sum\limits_{k=1}^{|S_i|}\sum\limits_{l=1}^{|\mathit{Dom}(A)|}p(P_{i,k}, a_l)\cdot Y_{i,k,l}}{\sum\limits_{k=1}^{|S_j|}\sum\limits_{l=1}^{|\mathit{Dom}(A)|}p(P_{j,k}, a_l)\cdot Y_{j,k,l}} \geq c.
\]
\noindent The semantic difference hard constraint in Section~\ref{subsec:maxsmt} remains unchanged. We also add soft constraints indicating that we should refine as few hypercubes as possible. That is, for each path hypercube $P_{i,j}$, either $\bigwedge_{l} Y_{i,j,l}$ or $\bigwedge_{l} \neg Y_{i,j,l}$ depending on the original decision.
Given a solution to the above constraints, we can reconstruct the sets $\mathcal{A}_T$ and $\mathcal{A}_F$ from the values of $Y_{i,j,l}$, and refine the corresponding path hypercubes.

\paragraph*{Ordered domains} Ordered domains, such as integers and real numbers, require a refinement approach based on an inequality predicate. For each path hypercube $P_{i,j}$, we generate $E_{i,j}$, the set of all possible expressions of the form $\mathit{variable} > \mathit{const}$ or $\mathit{variable} \leq \mathit{const}$, where each $\mathit{variable}$ is a data attribute, and $\mathit{const}$ is a value of this attribute corresponding to a data point that lies in the path hypercube. For each expression $e_{i,j,l} \in E_{i,j}$, we compute $p(P_{i,j}, e_{i,j,l})$ the passing rate of inputs in $P_{i,j}$ that satisfy the predicate $e_{i,j,l}$. Then, we solve the same formula as for the unordered case:
\[
\forall\, 1\leq i, j\leq M, k, l, \frac{\sum\limits_{k=1}^{|S_i|}\sum\limits_{l=1}^{|E_{i,j}|}p(P_{i,k}, e_{i,j,l})\cdot Y_{i,k,l}}{\sum\limits_{k=1}^{|S_j|}\sum\limits_{l=1}^{|E_{i,j}|}p(P_{j,k}, e_{i,j,l})\cdot Y_{j,k,l}} \geq c.
\]

\noindent The semantic difference hard constraint in Section~\ref{subsec:maxsmt} remains unchanged. We also ensure that at most one expression is used in refining each hypercube by including a cardinality constraints $\mathit{atMostOne}(e_{i,j,1}, ..., e_{i,j,n})$ for each path hypercube.

\subsection{Relaxing the Semantic Distance Constraint}
\label{subsec:relax}

When all the path hypercubes are fully refined, i.e., all data points residing in a path hypercube have exactly the same attribute values, refinement stops. 
At this point we adopt an alternative method. We relax the semantic distance constraint gradually, until a repair that meets the fairness requirement is found. The SMT formula we use here remains the same as in Section~\ref{subsec:maxsmt}, except for the soft constraints below.

\paragraph{Hard Constraints}
The first set of hard constraint are the fairness requirements.
\[
\forall\, 1\leq i, j\leq M, k, l, \frac{\sum\limits_{k=1}^{|S_i|}p_{i,k}\cdot X_{i,k}}{\sum\limits_{l=1}^{|S_j|}p_{j,l}\cdot X_{j,l}} \geq c.
\]
The second hard constraint is the semantic difference requirement. Note that the soft constraints in Section~\ref{subsec:maxsmt} have now been merged into this second hard constraint, because now the path hypercubes are data points themselves, and minimising the number of change to path hypercubes is the same as minimising the number of change to data points.
\[
\sum\limits_{i = 1}^{M}\sum\limits_{j = 1}^{|S_i|} \,p_{i,j}\cdot \mathrm{Xor}(X_{i,j},I_{i,j})\leq \alpha\cdot \mathrm{SemDiff}.
\]
Initially, $\mathrm{SemDiff}$, the semantic difference, is set to $\sum\limits_{i = 1}^{M}p_i\cdot|x_i-r_i|$, the theoretical minimal semantic difference computed in Section \ref{lo computation}. If the solver returns $\mathtt{UNSAT}$, we relax the semantic difference constraint by updating $\mathrm{SemDiff}$ to be $\mathrm{SemDiff}^{'}= \alpha\cdot\mathrm{SemDiff}$. That is,
\[
\sum\limits_{i = 1}^{M}\sum\limits_{j = 1}^{|S_i|} \,p_{i,j}\cdot \mathrm{Xor}(X_{i,j},I_{i,j})
\leq \alpha \cdot \mathrm{SemDiff}^{'} =\alpha \cdot\alpha\cdot \mathrm{SemDiff} .
\]
Next, we re-run the solver. We iteratively update $\mathrm{SemDiff}$ until the solver finds a solution of $X_{i,j}$'s.
To see that our algorithm always terminates, note that there exists a trivial solution for the fairness constraints, i.e., all $X_{i,j}$'s are $\mathtt{TRUE}$. 
The left hand side of the above inequality represents the actual semantic change in fraction, so it is bounded by 1. 
Meanwhile, since $\alpha > 1$, as we repeatedly relax $\mathrm{SemDiff}$, the right hand side of the inequality is unbounded and will exceed 1 after finitely many iterations.
Thus the solver is guaranteed to find a solution after a finite number of steps. This concludes our algorithm.

\section{FairRepair for Random Forest}
\label{forest}
The algorithm in Section \ref{main algo} can be extended to repair random forests. A random forest is an ensemble of decision trees, For a given input, its outcome is the majority vote (for classification) or average (for regression) of the outcomes from the decision trees. In our case, the outcome is binary, so we use the majority vote definition. Let $F:=\{T_1,T_2,\dots,T_n\}$ be a random forest with $n$ decision trees. For input $x$,
\[
F(x) = \maj\limits_{1\leq i\leq n} T_i(x),
\] 
where
\[
\maj\limits_{1\leq i\leq n} X_i = \begin{cases}
1, \text{if }\sum X_i > \dfrac{n}{2}\\
0, \text{otherwise}.
\end{cases}
\]
Given a training dataset $D_0$, we first train a random forest $F$ with an external tool. We extract the decision trees $T_1,\dots,T_n$ from $F$, and collect the path hypercubes as in Section \ref{subsec: sens collect}. Let $\{A_1,\dots,A_m\}$ and $M$ be defined as earlier. For each decision tree $T_i$, we collect its path hypercubes and group them based on the sensitive attributes in their path constraints. Let $t_j$ denote the sensitive attribute values of a sensitive group, and let $S_{i,j}$ denote the set of all path hypercubes that an input with sensitive attributes $t_j$ could possibly traverse in tree $T_i$. Each path hypercube is given by a tuple $(i,j,\mathrm{hid})$, indicating the tree $T_i$ it belongs to, the sensitive group $t_j$ it resides in, and unique index ``hid'' to identify the path hypercube from $S_{i,j}$.
We next calculate the passing rates of each sensitive group in the random forest, by recording the return value of every input data point. Since the optimal passing rates are purely determined by the classification results, we have the same linear optimisation problem for a random forest as for a decision tree. Solving the linear optimisation problems, we obtain the theoretical optimal passing rates $x_i$'s.


\subsection{Computing Patches for the Initial Decision Trees}
We now extend our SMT formulas to random forests. Let $D$ denote a dataset. Note that we do not require $D=D_0$.
%
%
The outcome for an input $d\in D$ is the majority vote of the outcomes of path hypercubes where point $d$ resides, across all decision trees. The point $d$ resides in the intersection of these path hypercubes. If we compute the intersections of all path hypercubes over all decision trees (for a random forest with 30 decision trees each with 6,000 paths in our case) the total number of intersections quickly explodes. 
%
%
We overcome this by only considering the intersections containing at least one datapoint from the dataset. This follows from our assumption that the dataset $D$ represents the input distribution.
Instead of finding intersections that contains datapoints, we start from the datapoints to find the intersections. 
%
%
For each datapoint $d\in D$, we identify the path hypercube where $d$ resides in each decision tree. For decision tree $T_i$, let the index tuple of the path hypercube that contains $d$ be $(i,j_d,\mathrm{hid}_{i,d})$, the intersection $I_d$ is
\[
I_d:=\bigcap_{1\leq i\leq n} \mathrm{P}_{i,j_d,hid_{i,d}},
\]
where $\mathrm{P}_{i,j_d,hid_{i,d}}$ is the path hypercube in $S_{i,j}$ with $j_d$ being the sensitive group $d$ belongs to, and $hid_{i,d}$ being the unique index of the path hypercube. For all $d\in D$, we compute $I_d$ and record $(i,j_d,hid_{i,d})$ for each $1\leq i\leq n$. Note that, it is possible that $I_{d_1} = I_{d_2}$ for $d_1\neq d_2$. Recall that $t_j$'s denote the sensitive groups, and $s_j$ denotes the set of datapoints in the sensitive group $t_j$. We shall classify datapoints in each $s_j$ into equivalence classes based on the intersection they reside in. We denote $d_1\sim d_2$ if $I_{d_1} = I_{d_2}$, and we use $d_1$ to denote the equivalence class without loss of generality. We abuse the notation to let $s_j$ also denote the equivalence classes. To indicate the number of datapoints within an intersection $I_{d_1}$, we assign a \textit{frequency} field $f_{d_1}$ to each intersection, denoting the number of datapoints in the equivalence class.

Next, we construct the SMT formulas. For each path hypercube $\mathrm{P}_{i,j,hid}$ in the random forest $F$, assign an SMT Boolean variable $X_{i,j,hid}$ as the return value of the path hypercube. We use $d$'s to denote the equivalence classes in $s_j$, and use $|s_j|$ to denote the number of datapoints in $s_j$. We then formulate the fairness requirement as follows:

\[
\forall\, 1\leq  k,l\leq M, \dfrac{\sum\limits_{d\in s_k}\maj\limits_{1\leq i\leq n}(X_{i,k,hid_{i,d}})\cdot f_{d}/|s_j|}{\sum\limits_{d\in s_l}\maj\limits_{1\leq i\leq n}(X_{i,l,hid_{i,d}})\cdot f_{d}/|s_k|} \geq c.
\]
The semantic hard constraint is:
\[
\sum\limits_{s_k\in D}\sum\limits_{d\in s_k} f_d\cdot\mathrm{Xor}(\maj\limits_{1\leq i\leq n}(X_{i,k,hid_{i,d}}),F(d))\leq \alpha\sum\limits_{i = 1}^{M}p_i\cdot|x_i-r_i|,
\]
where $s_k$'s partition the dataset $D$. Note that for all $d$ within an equivalence class, they belong to the same path hypercube in each decision tree. Hence they agree on $F(d)$.

\subsection{Computing Patches for the Intersections}

If the result of the above SMT constraints is $\mathtt{UNSAT}$, we treat each intersection path hypercube separately. Let $S_j:= \{I_d:d\in s_j\}$ for $1\leq j \leq M$. Assign a pseudo-Boolean SMT variable for each $I_d$ for $d\in D$. We abuse notation by letting $I_d$ also represent the initial outcome of the intersection. Let $f_d$ still denote the path probability of $I_d$. Now it is almost the same as repairing a decision tree, and the following captures the fairness requirement.
\[
\forall\, 1\leq i, j\leq M,  \frac{\sum\limits_{d\in s_i}f_{d}\cdot X_d}{\sum\limits_{d^{'}\in s_j}f_{d^{'}}\cdot X_{d^{'}}} \geq c.
\]
The semantic hard constraint is:
\[
\sum\limits_{j=1}^{M}\sum\limits_{I_d\in S_j} f_d\cdot\mathrm{Xor} (I_d,X_d)\leq \alpha\sum\limits_{i = 1}^{M}p_i\cdot|x_i-r_i|.
\]

If the above SMT constraints are $\mathtt{SAT}$, we update the changes in the intersections in the initial decision tree path hypercubes. If $I_d=X_d$, we do not change anything. If $I_d\neq X_d$, the majority vote in the intersection is changed and we compute the minimum number of path hypercubes whose outcomes need to be flipped to alter the majority vote. Recall the forest $F$ has $n$ trees. Let $I_{i,j_d,hid_{i,d}}$ denote the outcomes of the path hypercubes $\mathrm{P}_{i,j_d,hid_{i,d}}$ for $1\leq i \leq M$, $d\in D$. Then $\maj\limits_{1\leq i\leq M}(I_{i,j_d,hid_{i,d}}) = I_d$. To flip $I_d$, randomly select minimal number of decision trees $T_i$ with $I_{i,j_d,hid_{i,d}}\neq X_d$ and add the path hypercube $I_d$ with prediction $X_d$ to $T_i$, such that $\maj\limits_{1\leq i\leq M}(I_{i,j_d,hid_{i,d}})$ is flipped. If $I_d = \mathtt{TRUE}$, minimally $\sum\limits_{1\leq i\leq M}(I_{i,j_d,hid_{i,d}})-\left \lfloor{\dfrac{n}{2}}\right \rfloor $ decision trees with $I_{i,j_d,hid_{i,d}} = \mathtt{TRUE}$ will have added a new path hypercube $I_d$ (the intersection path hypercube) with a $\mathtt{False}$ prediction. If $I_d = \mathtt{FALSE}$, minimally $\left \lceil{\dfrac{n}{2}}\right \rceil -\sum\limits_{1\leq i\leq M}(I_{i,j_d,hid_{i,d}})$ decision trees with $I_{i,j_d,hid_{i,d}} = \mathtt{FALSE}$ will have added a new path hypercube $I_d$ with a $\mathtt{TRUE}$ prediction. 
%

If the above SMT constraints are still $\mathtt{UNSAT}$, we proceed to refine the intersections in the same manner as we refined the decision trees in Section~\ref{subsec: refine}.

\section{Relative Soundness and Completeness Proof}
\label{sec:complete}

In this section, we present a proof of relative soundness and completeness of our repair technique. This completeness property distinguishes our work from existing decision tree repair tool DIGITS~\cite{CAV17}. The only assumption we make is the soundness and completeness of the SMT solver: if the solver is sound and complete, then our algorithm is also sound and complete. The proof for decision trees and random forests are similar. Here we omit the proof for random forests.

The soundness of our approach partly follows from the soundness of the SMT solver. If the solver returns a solution, it must satisfy the hard constraints of the fairness requirement and the (possibly relaxed) semantic change constraint. We explained how our algorithm always provide a solution for a repaired model in \ref{subsec:relax}. We now prove that whenever our algorithm produces a repaired model, its fairness and semantic difference is always guaranteed. In practice, we also compute the ratios of the passing rates to check if they have passed the fairness threshold. We refer our readers to \ref{evaluation} for the experimental results. 


\begin{lemma}
 For a given decision tree $T$ obtained via training on a given dataset $D$,
 for any $\alpha>1$, if the group fairness requirement is satisfiable and there exists an optimal repair $R_{op}$, our algorithm (Section~\ref{main algo}) is able to find a repair $R_{subop}$ satisfying the fairness requirement, such that $SD(R_{subop},T) \leq \alpha SD(R_{op},T)$. Here $R_{subop}$ refers to a solution (assignment) of the variables $X_{i,j}$'s, and $SD$ is the semantic distance function defined in Section \ref{subsec: defs}. For any assignment $R$ of $X_{i,j}$'s, 
\[
SD(R,T) = \sum\limits_{i = 1}^{M}\sum\limits_{j = 1}^{|S_j|} \,p_{i,j}\cdot \mathrm{Xor}( X_{i,j},I_{i,j}),
\]
where the symbols are defined in Section~\ref{main algo}.
\end{lemma}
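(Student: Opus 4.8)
The plan is to establish three things in turn: the algorithm terminates with some variable assignment $R_{subop}$; that assignment satisfies the fairness hard constraint, so it is a repair; and the semantic budget active when the algorithm returns never exceeds $\alpha\cdot SD(R_{op},T)$. Termination and soundness follow almost immediately from the assumed soundness and completeness of the SMT solver. The refinement loop terminates because every \textbf{Refine} strictly increases the number of path hypercubes, which is bounded above by $|D|$. The relaxation loop terminates because $\alpha>1$ drives the right-hand side of the semantic constraint above $1$ after finitely many steps, whereas the actual semantic difference $\sum_{i,j}p_{i,j}\,\mathrm{Xor}(X_{i,j},I_{i,j})$ is at most $\sum_{i,j}p_{i,j}=1$, and the all-\texttt{TRUE} assignment always satisfies the fairness inequalities (under it every group's passing rate is $1$, so all ratios equal $1\ge c$); hence the solver is eventually forced to answer \texttt{SAT}. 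We may assume $sd:=\sum_i p_i|x_i-r_i|>0$, since otherwise the original rates already satisfy fairness, $R_{op}=T$, and the algorithm returns $T$ at its first query. For soundness, any model the solver returns satisfies the fairness hard constraint, which --- under the standing assumption that $D$ represents the input distribution --- is precisely group fairness for the induced tree.

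For the approximation bound I would first record the lower bound noted in Section~\ref{lo computation}. Any repair $R$ satisfying group fairness induces passing rates $\hat x_i$ that satisfy the fairness inequalities and lie in $[0,1]$, and the fraction of the population in group $i$ that receives a changed outcome is at least $p_i|\hat x_i-r_i|$ (a net change in passing mass of $p_i|\hat x_i-r_i|$ requires at least that many flips). Summing over $i$ and using that $(\hat x_i)$ is feasible for the linear program gives $SD(R,T)\ge\sum_i p_i|\hat x_i-r_i|\ge sd$; in particular $sd\le SD(R_{op},T)$. I would also observe that refinement changes neither $sd$ (which depends only on the group quantities $p_i,r_i$ and the fixed linear-program optimum $x_i$) nor the meaning of the fairness and semantic constraints, since split hypercubes inherit the parent's outcome and $p_{i,j}$ always counts the data points in the hypercube; so all queries compare against the same quantities.

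Next comes a case split on the semantic bound $\beta$ in force when the solver returns \texttt{SAT}. If this happens inside the refinement loop, then $\beta=\alpha\cdot sd$ and $SD(R_{subop},T)\le\alpha\cdot sd\le\alpha\cdot SD(R_{op},T)$. Otherwise it happens in the relaxation loop, whose queries run over the fully refined hypercubes (i.e., over colorings of individual data points) with semantic bounds $\alpha\cdot sd,\ \alpha^{2}\cdot sd,\ \alpha^{3}\cdot sd,\dots$ in this order. If the first of these is already \texttt{SAT}, the estimate above applies verbatim. If instead the $t$-th query ($t\ge 2$) is the first \texttt{SAT} one, then the $(t-1)$-th query was \texttt{UNSAT}, so by completeness of the solver no coloring of the data points satisfies group fairness with semantic difference at most $\alpha^{t-1}\cdot sd$; since $R_{op}$ is exactly a minimal-semantic-difference fair coloring of the data points (Section~\ref{sec: optimal}), this forces $SD(R_{op},T)>\alpha^{t-1}\cdot sd$, whence $SD(R_{subop},T)\le\alpha^{t}\cdot sd=\alpha\cdot(\alpha^{t-1}\cdot sd)<\alpha\cdot SD(R_{op},T)$. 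In all cases $SD(R_{subop},T)\le\alpha\cdot SD(R_{op},T)$, which is the claim.

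The main obstacle is precisely this last step of $\alpha$-factor accounting. Each relaxation multiplies the semantic budget by $\alpha$, so a returned model on its own only satisfies $SD(R_{subop},T)\le\alpha^{t}\cdot sd$, which says nothing about the optimum. The argument works only because reaching the $t$-th relaxation level \emph{certifies} --- through an \texttt{UNSAT} verdict at the finest granularity together with the defining optimality of $R_{op}$ --- a matching lower bound $SD(R_{op},T)>\alpha^{t-1}\cdot sd$ that cancels all but one power of $\alpha$. Pinning down that the fully-refined feasible colorings are exactly the data-point repairs, that the first fully-refined query uses bound $\alpha\cdot sd$ (so the linear-programming bound covers the cases that never reach the relaxation loop), and that refinement leaves $sd$ untouched, is where the actual care is needed.
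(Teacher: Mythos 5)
Your proposal is correct and follows essentially the same route as the paper's proof: a case split on whether the solver returns \texttt{SAT} before, during, or after full refinement, with the first two cases handled by the hard constraint $SD(R_{subop},T)\le\alpha\cdot sd\le\alpha\cdot SD(R_{op},T)$ and the last by using the final \texttt{UNSAT} verdict (via solver completeness) to certify $SD(R_{op},T)>\alpha^{t-1}\cdot sd$, cancelling all but one factor of $\alpha$. You are somewhat more explicit than the paper about termination, the justification that $sd$ lower-bounds every repair's semantic difference, its invariance under refinement, and the degenerate case $sd=0$, but the underlying argument is the same.
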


\begin{proof}
In Section \ref{subsec:relax}, we have shown that our algorithm always terminates with an output repair whose fairness is hard-constrained by the SMT solver. It suffices to show that repair satisfies the semantic requirement even after relaxation of $\alpha$.

Our algorithm first checks if the group fairness requirement can be achieved by only flipping the existing path hypercubes as described in Section~\ref{subsec:maxsmt}. If the solver returns $\mathtt{SAT}$, our algorithm outputs the assignments of $X_{i,j}$'s and terminates. 
If the fairness requirement is $\mathtt{UNSAT}$, our algorithm proceeds to refine the path hypercubes, checks if the fairness requirement is $\mathtt{SAT}$ after the refinement, until all the path hypercubes are fully refined (Section~\ref{subsec: refine}). Once all the hypercubes are fully refined, the path hypercubes are equivalent to the input data points, meaning each path hypercube denotes a single data point, with its duplicates recorded as path probability.
Our algorithm then relaxes the semantic distance constraints until the solver returns $\mathtt{SAT}$ (Section~\ref{subsec:relax}), outputs the solution, and terminates.

A solution could be produced in any of the above three stages. We now show that whenever our algorithm outputs a solution $R_{subop}$, $R_{subop}$ satisfies $SD(R_{subop},T) \leq \alpha SD(R_{op},T)$. To distinguish $R_{subop}$ from $R_{op}$, let $R_{op} =\{x_{i,j}:1\leq i\leq M,1\leq j \leq |S_i|\}$, and $R_{subop} = \{y_{i,j}:1\leq i\leq M,1\leq j \leq |S_i|\}$ such that both are assignments of the variables $X_{i,j}$'s. 

\paragraph{Before Refinements} If the fairness requirement can be achieved by only changing the outcomes of some path hypercubes, without splitting any of them, as a hard constraint (Section~\ref{subsec:maxsmt}), we have 
\[
\sum\limits_{i = 1}^{M}\sum\limits_{j = 1}^{|S_j|} \,p_{i,j}\cdot \mathrm{Xor}( y_{i,j},I_{i,j})\leq \alpha\sum\limits_{i = 1}^{M}p_i\cdot|x_i-r_i|.
\]
The left hand side is $SD(R_{subop},T)$, and the right hand side is the product of $\alpha$ and the theoretical minimal semantic difference, which is a lower bound for the actual semantic difference. That is,
\[
\sum\limits_{i = 1}^{M}\sum\limits_{j = 1}^{|S_j|} \,p_{i,j}\cdot \mathrm{Xor}( x_{i,j},I_{i,j}) \geq \sum\limits_{i = 1}^{M}p_i\cdot|x_i-r_i|.
\]
Therefore,
\begin{align*}
SD(R_{subop},& T) 
= \sum\limits_{i = 1}^{M}\sum\limits_{j = 1}^{|S_j|} \,p_{i,j}\cdot \mathrm{Xor}( y_{i,j},I_{i,j})\\
\leq & \,\alpha \sum\limits_{i = 1}^{M}\sum\limits_{j = 1}^{|S_j|} \,p_{i,j}\cdot \mathrm{Xor}x_{i,j},I_{i,j}) =\alpha SD(R_{op},T).
\end{align*}

\paragraph{During Refinements} If the fairness requirement is not achieved by only changing the outcomes of the hypercubes, we refine the path hypercubes and call the solver again. Note that $X_{i,j}$'s are now different, since $|S_i|$ increases as the number of path hypercubes increases due to refinement. We use $I^{'}_{i,j}$'s to denote the outcomes of the new path hypercubes, to distinguish the path hypercubes after refinement from those before the refinements. Nonetheless, the outcomes of the path hypercubes before and after a refinement do not conflict with each other, as each path hypercube subset inherits the outcome from the original one. Since in the SMT formula construction, the same semantic difference hard constraint is inherited from the previous step, we also have
\begin{align*}
SD(R_{subop},& T) 
= \sum\limits_{i = 1}^{M}\sum\limits_{j = 1}^{|S_j|} \,p_{i,j}\cdot \mathrm{Xor}( y_{i,j},I^{'}_{i,j})\\
\leq & \,\alpha \sum\limits_{i = 1}^{M}\sum\limits_{j = 1}^{|S_j|} \,p_{i,j}\cdot \mathrm{Xor}( x_{i,j},I^{'}_{i,j}) =\alpha SD(R_{op},T).
\end{align*}

\paragraph{After Refinements} 
When all the path hypercubes are fully refined, $X_{i,j}$'s effectively represent the classifications of each individual point. We shall still use $I_{i,j}^{'}$'s to represent the original return values for each data point. After the path hypercubes have been fully refined, if the SMT solver still returns \texttt{UNSAT}, it means that the theoretical minimal semantic difference cannot be achieved. Here we use the completeness of the solver. Thus, we know that

\[
\sum\limits_{i = 1}^{M}\sum\limits_{j = 1}^{|S_j|} \,p_{i,j}\cdot \mathrm{Xor}( X_{i,j},I^{'}_{i,j}) > \alpha\cdot \mathrm{SemDiff},
\]
for all assignments for $X_{i,j}$'s that satisfy the fairness requirement. In particular,
\[
SD(R_{op},T) = \sum\limits_{i = 1}^{M}\sum\limits_{j = 1}^{|S_j|} \,p_{i,j}\cdot \mathrm{Xor}(x_{i,j},I^{'}_{i,j}) > \alpha\cdot \mathrm{SemDiff}.
\]
We use $p_{i,j}$'s to denote the frequency of repeated data points. As explained in Section~\ref{subsec:relax}, the relaxing procedure always terminates. Let $\mathrm{SemDiff_0}$ be the $\mathrm{SemDiff}$ when the solver return $\mathtt{UNSAT}$ at the last time. Then we have the following:
\[
SD(R_{op},T) = \sum\limits_{i = 1}^{M}\sum\limits_{j = 1}^{|S_j|} \,p_{i,j}\cdot \mathrm{Xor}(x_{i,j},I^{'}_{i,j}) > \alpha\cdot \mathrm{SemDiff_0}, \text{ and }
\]
\[
SD(R_{subop},T)= \sum\limits_{i = 1}^{M}\sum\limits_{j = 1}^{|S_j|} \,p_{i,j}\cdot \mathrm{Xor}(y_{i,j},I^{'}_{i,j}) \leq \alpha\cdot\alpha\cdot \mathrm{SemDiff_0}.
\]
The above inequalities imply 
\[
SD(R_{subop},T)\leq \alpha \cdot SD(R_{op},T).
\]

\end{proof}

\section{Implementation}

We developed FairRepair in 2,500 lines of Python code. FairRepair repairs decision trees trained using scikit-learn~\cite{scikit-learn} \emph{DecisionTreeClassifier}. In general, our algorithm is not restricted to any specific decision tree or random forest training tool.
FairRepair changes the structure of the decision tree, but it does not change the output range of classifications or assume new input features. 
We now briefly describe the implementation.


\paragraph{Tree to Hypercubes Conversion}
Decision trees and random forests trained by scikit-learn tool do not support categorical attributes directly. By default, it assumes all attributes are continuous. This structure cannot be used for MaxSMT solving directly, and we need to convert it into path hypercubes. We use \textit{OneHot Encoding} to generate $n=|A|$ attribute names $A_1, \dots, A_n$ for a categorical attribute $A$. The OneHot threshold is set to be 0.5 and the data points' valuation on these attributes are 0 or 1. Note that,
\begin{align*}
\forall \text{ discrete }& A, \text{if } A = \{A_1,\dots,A_n\}, \text{then }\forall d \in D. \exists \, i,1 \leq i \leq n. \\
& d[A_i] = 1 \wedge (\forall A' \in \{A_1,\dots,A_n\}\backslash {A_i}.\, d[A'] = 0 ),
\end{align*}
as each data point has a unique valuation for the attribute $A$.

\paragraph{Hypercube Mapping on Sensitive Attributes}
To obtain the intersections of path hypercubes with sensitive groups of multiple sensitive attributes, we implemented the cross product of path hypercube sets. For two sensitive attributes $A = \{a_1,\dots,a_m\}$, $B=\{b_1,\dots,b_n\}$, we first collect path hypercube set $\mathcal{S}_A = \{S_{a_1}, \dots,S_{a_m}\}$ and $\mathcal{S}_B = \{S_{b_1},\dots\,S_{b_n}\}$ as described in Section~\ref{subsec: sens collect}. Then we compute the cross product $\mathcal{S}_A\times\mathcal{S}_B := \{S_{a_i,b_j}:1\leq i\leq m,1\leq j \leq n\}$, where each $S_{a_i,b_j} := \{h\cap h^{'} : h \in S_{a_i}, h^{'} \in S_{b_j}\}$. This cross product computation can be easily generalised to multiple attributes.

\paragraph{Linear Optimisation} 
With the path hypercubes we can calculate the path probabilities and passing rates for each sensitive group. We use the Python library \textit{pulp} to calculate the theoretical minimal changes in passing rates using linear optimisation. Fairness requirements are implemented as inequalities, where each variable denotes the passing rate of a sensitive group. We added additional variables to account for absolute values, which is not supported by \textit{pulp} directly, and omitted the sensitive groups with too few data points (less than ten).  

\paragraph{SMT Formulas Construction}
We use the Z3 SMT solver. All fairness requirements are constructed as linear arithmetic formulas. To improve performance, rational fairness threshold $c$ is represented as the ratio of two integers $(a,b)$. 
For each path hypercube, we use $(n+1)$ variables $X, X_1,\dots,X_n$, where $n$ is the number of data points residing in the path hypercube. Variable $X$ (\textit{path variable}) denotes the return value of the path hypercube, and $X_i$'s (\textit{point variables}) denote the data points in the path hypercube. Each $X_i$ is encoded to have the same value as the $X$. Each passing rate is summed over the path variables in the sensitive group. We also used Z3's in-built pseudo-Boolean functions \emph{PbLe} (\emph{PbGe}) and \emph{AtMost} (\emph{AtLeast}). These cardinality functions are optimised to handle linear arithmetic expressions with pseudo-Boolean variables. As an example, the majority vote on point $x$ in Section \ref{forest} can be expressed as
\[
\maj\limits_{1\leq i\leq n} T_i(x) = \mathrm{AtLeast}(\{T_i(x):1\leq i \leq n\}, \left \lceil{\dfrac{n}{2}}\right \rceil),
\]
where the right hand side expression returns a pseudo-Boolean value for if at least $\left \lceil{\dfrac{n}{2}}\right \rceil$ decision trees output \texttt{TRUE} on $x$. 

\section{Evaluation}
\label{evaluation}

FairRepair's goal is to repair an arbitrary decision tree to be fair while bounding the semantic change. In our evaluation, we study the following research questions.

\noindent \textbf{RQ 1:} \emph{Does our implementation of FairRepair produce a fair decision tree (random forest) given an unfair decision tree (random forest) input?} We run FairRepair with different fairness thresholds and observe the resulting fairness.
    
\noindent\textbf{RQ 2:} \emph{How do the semantics of the fair decision trees (random forest) produced by FairRepair compare to the input unfair decision trees (random forest)?} We compare the semantic difference between the input and output models produced by FairRepair.
    
\noindent\textbf{RQ 3:} \emph{How scalable is FairRepair?}  We evaluate FairRepair's runtime for different input datasets sizes.

\noindent\textbf{RQ 4:} \emph{How much does the decision model change in structure?} In particular, we measured the number of new paths produced during the refinement.

\noindent\textbf{RQ 5:}\emph{How much does the prediction accuracy change due to the repair?} We measured the precision and recall values to compare the accuracy of the models before and after repair.

\subsection{Methodology}

For our evaluation we use two datasets: the German Credit card dataset\footnote{\url{https://archive.ics.uci.edu/ml/datasets/statlog+(german+credit+data)}} and the Adult dataset\footnote{\url{https://archive.ics.uci.edu/ml/datasets/Adult}}, both from the UC Irvine Machine Learning Repository. These datasets have several sensitive attributes and come from domains in which fairness is important.

Table \ref{tab:datasets} shows the size and sensitive attributes of the datasets. The \emph{German} dataset classifies a person as a good or a bad credit risk. The dataset includes 1,000 persons' records and each record ranges over 20 features, such as the person's age and employment status.
The \emph{Adult} dataset is used to predict whether a person will earn over 50K dollars a year. The dataset has about 48K records and each record ranges over 14 features. We select attributes that are commonly recognised as sensitive features to form the sensitive groups. For both datasets, \emph{sex} is selected to be sensitive attribute. For German dataset, we selected whether a person is a \textit{foreigner worker} and for adult dataset, we selected \textit{race} as the second sensitive attribute.

\begin{table}
	\centering{}%
	\begin{tabular}{cc|cc|cc}
		\hline
		\multicolumn{6}{c}{\textbf{Adult Dataset}}\cr
		\hline
	    \multicolumn{2}{c|}{\textbf{Sex\footnotemark[4]}} & \multicolumn{2}{c|}{\textbf{Race}} & \multicolumn{2}{c}{\textbf{Sex and Race}}\tabularnewline 
		\emph{Min} & \emph{Max} & \emph{Min} & \emph{Max} & \emph{Min} & \emph{Max} \tabularnewline
		\hline 
		0.36 & 0.36 & 0.43 & 0.99 & 0.17 & 0.99\tabularnewline
		\hline 
		\multicolumn{6}{c}{}\cr
		\hline 
		\multicolumn{6}{c}{\textbf{German Dataset}}\cr
		\hline
	    \multicolumn{2}{c|}{\textbf{Sex}} & \multicolumn{2}{c|}{\textbf{F. Worker\footnotemark[4]}} & \multicolumn{2}{c}{\textbf{Sex and F. Worker}}\tabularnewline 
		\emph{Min} & \emph{Max} & \emph{Min} & \emph{Max} & \emph{Min} & \emph{Max} \tabularnewline
		\hline 
		0.74 & 0.94 & 0.45 & 0.45 & 0.12 & 0.99\tabularnewline
		\hline 
	\end{tabular}

	\caption{Min, max initial fairness (initial pairwise ratios of the passing rates across sensitive groups) in Adult and German dataset. 
	}
	\label{tab:init-fairness}
\end{table}

Tables~\ref{tab:init-fairness} list the minimum and maximum initial group fairness values (between 0 and 1) we evaluate in the Adult and German datasets. For each dataset, these fairness values are pairwise ratios of the initial passing rates across their sensitive groups. The (group) fairness inequalities defined in Section \ref{subsec: defs} require all these ratios to be greater than fairness threshold $c$ after the repair procedure.

\begin{table}[t]
\centering
{\small
\begin{tabular}{ r| rccp{2.5cm} }
 \hline
 \textbf{Dataset} & \textbf{Points} & \textbf{Classes} & \textbf{Features} & \textbf{Sensitive attributes} \\
 \hline
\emph{German} & 1,000 & 2 & 20 & Sex (4), Foreign-Worker (2), Sex and Foreign-Worker (8)\\
 \hline
 \emph{Adult} & 48,842 & 2 & 14 & Sex (2), Race (5), Sex and Race (10)\\
 \hline
 \end{tabular} 
}
\caption{Datasets used in our evaluation.
Numbers in parenthesis indicate the number of groups that must be fair relative to each other. For example, there are 4 groups, for the \emph{Sex} (\emph{"Personal status and sex"}) attribute in the German dataset. 
}
\label{tab:datasets}
\end{table}

\footnotetext[4]{With two groups, the min and max are just inverses of one another.}

All experiments were run with Python 3.5.2, on a Linux Ubuntu 16.04 server with 28-core 2.0 GHz Intel(R) Xeon(R) CPU and 62GB RAM.
When generating a scikit-learn \emph{DecisionTreeClassifier} and \emph{RandomForestClassifier}, we used an 80/20 training-test split of the dataset. In the experiments, we used the same datasets for training and repairing the models. However, we do not require them to be the same in general.

For both decision tree and random forest, we collect results on the two datasets respectively, over 6 fairness thresholds, 6 semantic bounds and 3 choices of sensitive attribute(s) (in total 108 sets of parameters). In the following, if not specified, the plots show runs on all varying parameters. For experiments on random forests, we select the default forest size to be 30 trees. We measured the classification accuracy for random forests of different sizes. Figure \ref{fig:num-of-trees} shows the plots for both datasets. The differences in classification error rate is less than 1\% for forests with more than 30 trees  We also collected results on different forest sizes for scalability and syntactic change analysis. To reduce contingencies in the results, each experiment was run with 5 different random seeds\footnote[5]{Randomness is used by scikit-learn's decision tree training procedure.}.

\begin{figure}[t]
    \centering
    \includegraphics[width=\columnwidth]{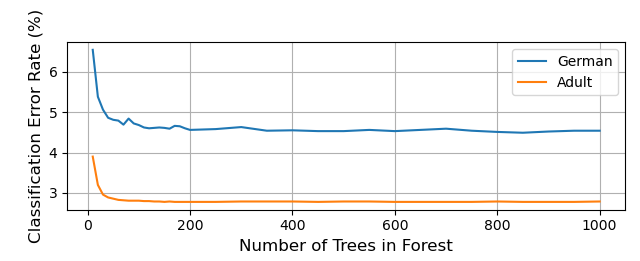}
    \caption{Classification Error Rate vs. Forest Size (Adult). 
    }
    \label{fig:num-of-trees}
\end{figure}

\textbf{RQ1: Fairness.} FairRepair must achieve the stated fairness guarantee. We used the two datasets to validate that our approach is able to achieve the fairness requirement. For decision trees, figures~\ref{fig:rq1-fairness-adult} and~\ref{fig:rq1-fairness-german} show scatter plots of the fairness achieved for different input fairness threshold values for multiple runs on the Adult and German datasets, respectively. 
The plots show results for repairs of all the sensitive attribute combinations in Table~\ref{tab:datasets}. 
For random forest, figure \ref{fig:rq1-forest-adult} shows data for all 540 FairRepair runs on the Adult dataset, with the tree size set to 30.
We do not show the corresponding plot for the German dataset (which is smaller), but the achieved output tree fairness is similar to Figure~\ref{fig:rq1-fairness-german}.

As expected, the fairness of the output model is bounded below by the fairness threshold (the $y=x$ line) and it is bounded above by the inverse of threshold ($y=1/x$ line). The reason for the inverse upper bound is because we show an achieved fairness for every two sensitive groups per experiment. For example, for experiments with \emph{Sex} as the sensitive attribute, we plot the fairness of the \emph{Sex=Female} group against the \emph{Sex=Male} group, and also the inverse fairness, as well. The fairness of all the points falls in the expected range, thus empirically validating our guarantee about fairness.

\begin{figure}[t]
    \centering
    \includegraphics[width=\columnwidth]{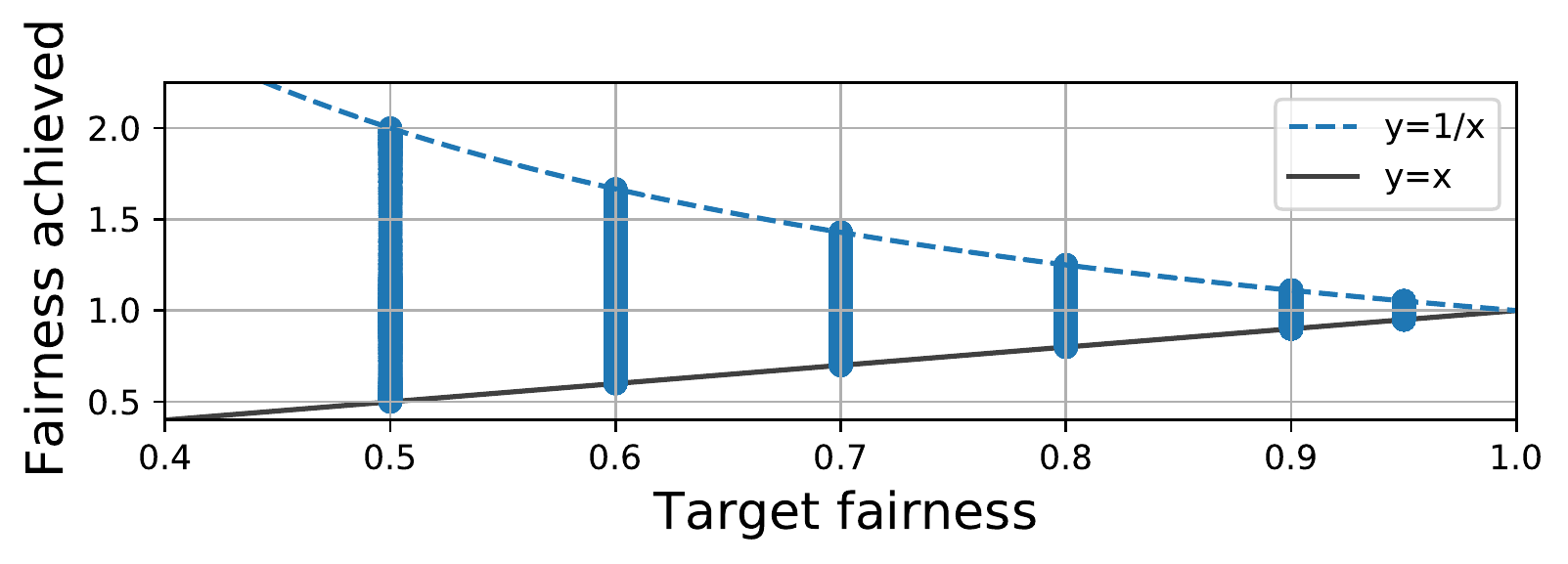}
    \caption{Achieved output fairness on the Adult dataset for different fairness thresholds (decision tree). 
    }
    \label{fig:rq1-fairness-adult}
\end{figure}
\begin{figure}[t]
    \centering
    \includegraphics[width=\columnwidth]{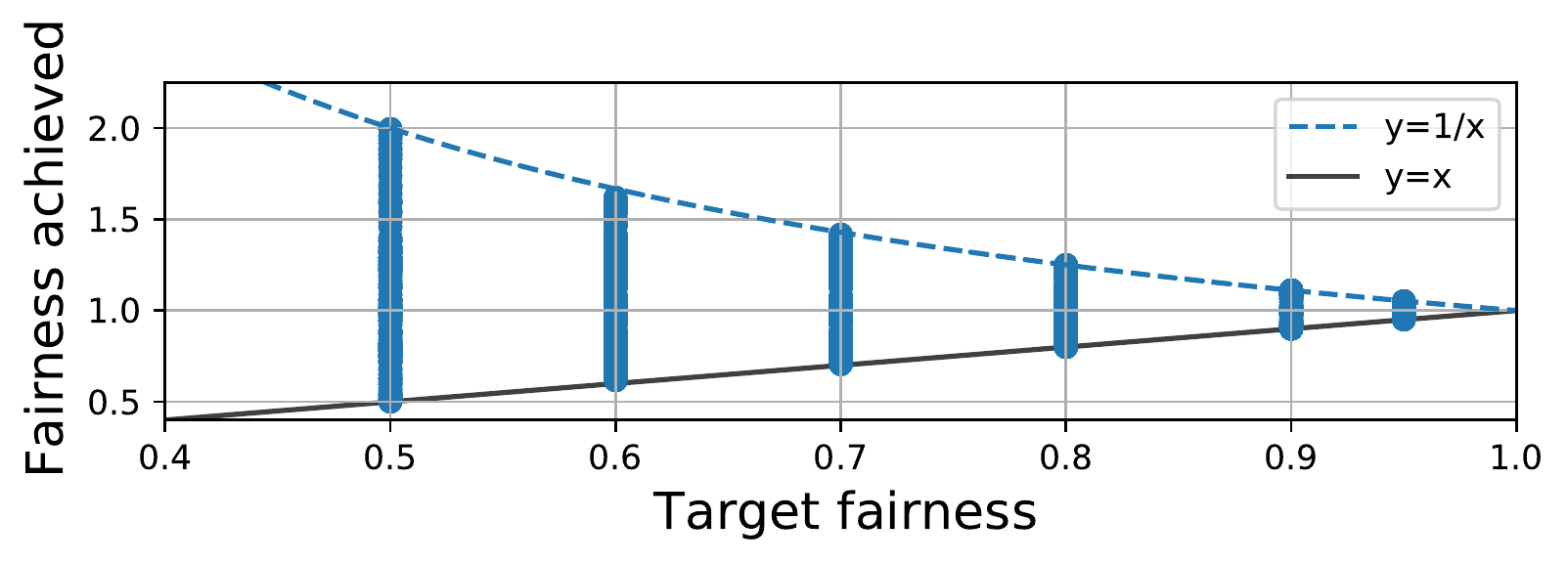}
    \caption{Achieved output fairness on the German dataset for different fairness thresholds (decision tree). 
    }
    \label{fig:rq1-fairness-german}
\end{figure}
\begin{figure}[t]
    \centering
    \includegraphics[width=\columnwidth]{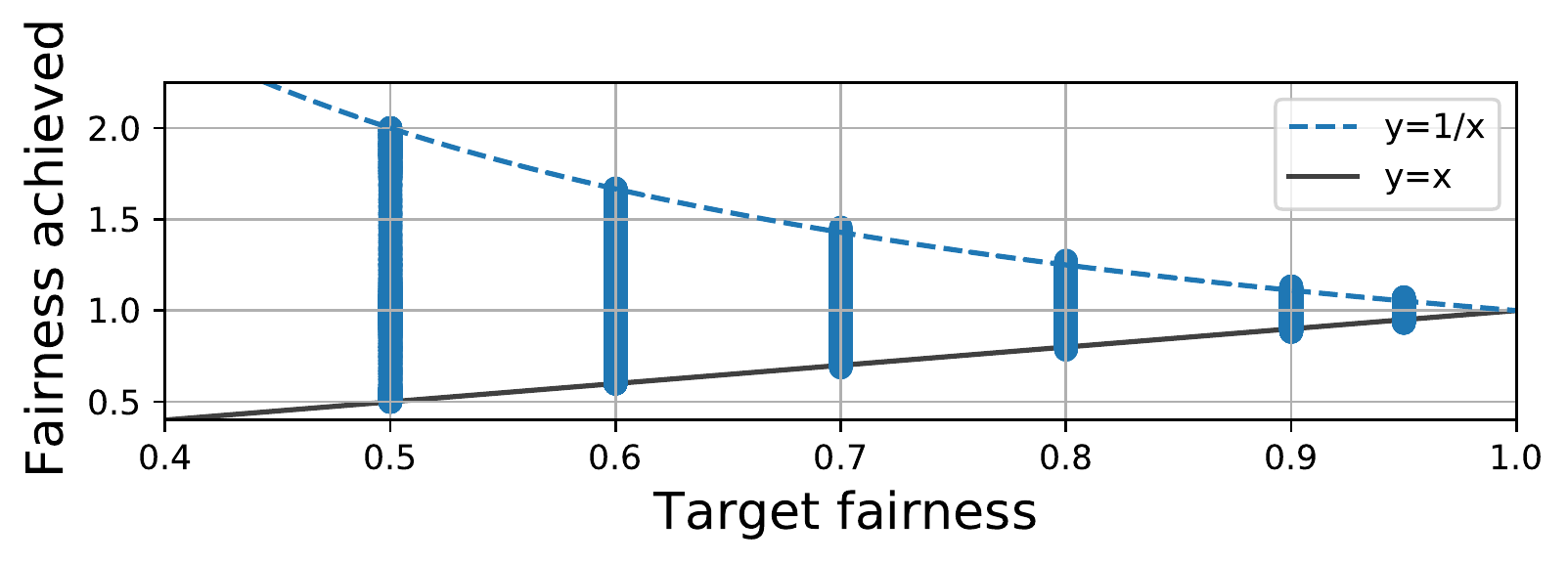}
    \caption{Achieved output fairness on the German dataset for different fairness thresholds (random forest). 
    }
    \label{fig:rq1-forest-adult}
\end{figure}

\textbf{RQ2: Semantic difference.} A fairness requirement could be trivially achieved by changing the classification output of path hypercubes in the tree that impact many data points. Achieving the minimal semantic difference (changing the classification output on the fewest number of points) is challenging. FairRepair makes a semantic difference guarantee -- given an $\alpha$ parameter, FairRepair will produce a repair that is within a $\alpha$ multiplicative constant of the minimal semantic repair. We collected results on various $\alpha$ values, ranging from 1.05 to 2.0. 
For decision trees, figure~\ref{fig:rq2:semantic-diff-adult} and~\ref{fig:rq2:semantic-diff-german} plot the semantic difference between the input and output decision trees for varying $\alpha$ and fixed $c=0.8$ for the Adult and German datasets, respectively. The $y$-axis shows the number of datapoints flipped during repair. Figure~\ref{fig:rq2:semantic-diff-adult} shows results for \emph{Sex} and \emph{Race} as the sensitive attributes. Figure~\ref{fig:rq2:semantic-diff-german} shows results for \emph{Sex} and \emph{Foreign Worker} as the sensitive attributes. 
FairRepair points in both figures fall into the region that is bounded below by the theoretical minimum of the semantic difference, and is bounded above by the multiplicative constant $\alpha$. We also checked that these results hold for other fairness thresholds and sensitive attributes.

The result for random forest is similar. Figure \ref{fig:RQ2-forest-adult} shows results for the Adult dataset with $c=0.8$, \emph{Sex} and \emph{Race} as the sensitive attributes. The semantic difference (in percentage) drops from 10\% to 5\% as $\alpha$ decreases. All semantic difference are bounded above by $\alpha$ for all settings of fairness thresholds and sensitive attributes.

\begin{figure}[t]
    \centering
    \includegraphics[width=0.975\columnwidth]{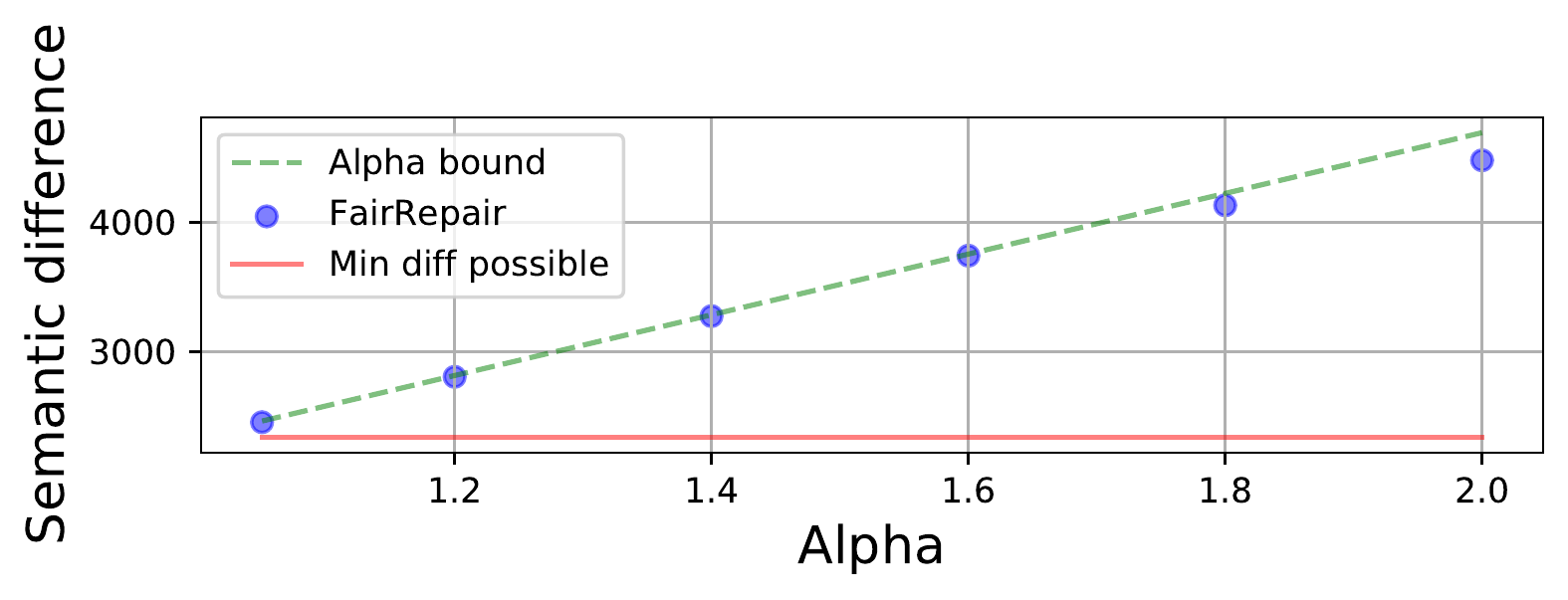}  
    \caption{The semantic difference between input and output random forests for different $\alpha$ bounds, in the Adult dataset. 
    }
    \label{fig:rq2:semantic-diff-adult}
\end{figure}

\begin{figure}[t]
    \centering
    \includegraphics[width=\columnwidth]{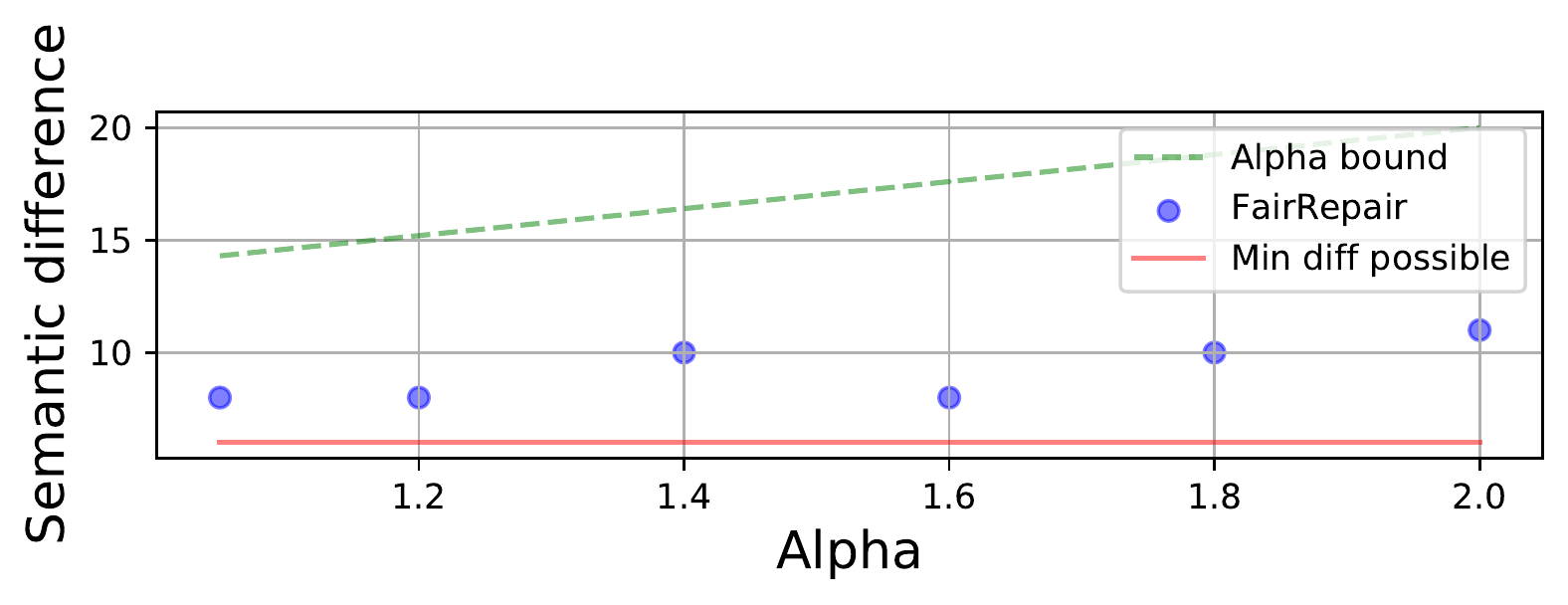}
    \caption{The semantic difference between input and output decision trees for different $\alpha$ bounds, in the German dataset. 
    }
    \label{fig:rq2:semantic-diff-german}
\end{figure}

\begin{figure}[t]
    \centering
    \includegraphics[width=\columnwidth]{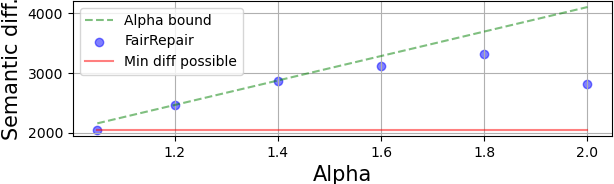}  
    \caption{The semantic difference between input and output random forests for different $\alpha$ bounds, in the Adult dataset. 
    }
    \label{fig:RQ2-forest-adult}
\end{figure}

\textbf{RQ3: Scalability in size of dataset.} FairRepair uses an input dataset to compute the fairness and semantic differences as it executes. For large datasets, these operations may be expensive. To investigate scalability we only use the Adult dataset since the German dataset is relatively small. Figure~\ref{fig:rq3-dataset-size} plots the total running time of FairRepair on datasets of varying sizes for decision trees. For this, we created partitions of the Adult dataset by sampling uniform random subsets of increasing size (from 5K to 45K points). The trained decision trees for these datasets varied in size, from 577 path hypercubes to 5,521 path hypercubes. In the figure we also label the runs according to their fairness threshold. We found that the choice of fairness threshold and $\alpha$ made little difference to the running time. There is no clear relationship between the fairness threshold and $\alpha$, and time to repair. These experiments used \emph{Sex} as the sensitive attribute.
The figure illustrates that larger datasets take longer to repair. For example, on an input of 45,000 points and a tree with about 5,000 path hypercubes FairRepair completes the repair in under 5 minutes.

For random forest, the running time is shown in figure \ref{fig:rq3-forest}. Since fairness and $\alpha$ does not significantly affect the running time, we fixed fairness threshold to be 0.8 and $\alpha$ to be 1.2. As the size of the dataset increases, it takes up to 30 minutes (on average) to repair a biased random forest on the full Adult dataset. 

We also measured the scalability against various forest size. Figure \ref{fig:rq3-forest-size} shows the running time for various forest sizes ranging from 10 trees to 100 trees, with fixed fairness threshold 0.8 and $\alpha=1.2$. We observe a proportional increase in the amount of repair time as the forest size increases. For random forest on the Adult dataset with 100 trees, FairRepair takes up to 90 minutes to find a repair. 

We also note that our code is in Python and we believe it could be further optimized to achieve much better scalability.

\begin{figure}[t]
    \centering
    \includegraphics[width=\columnwidth]{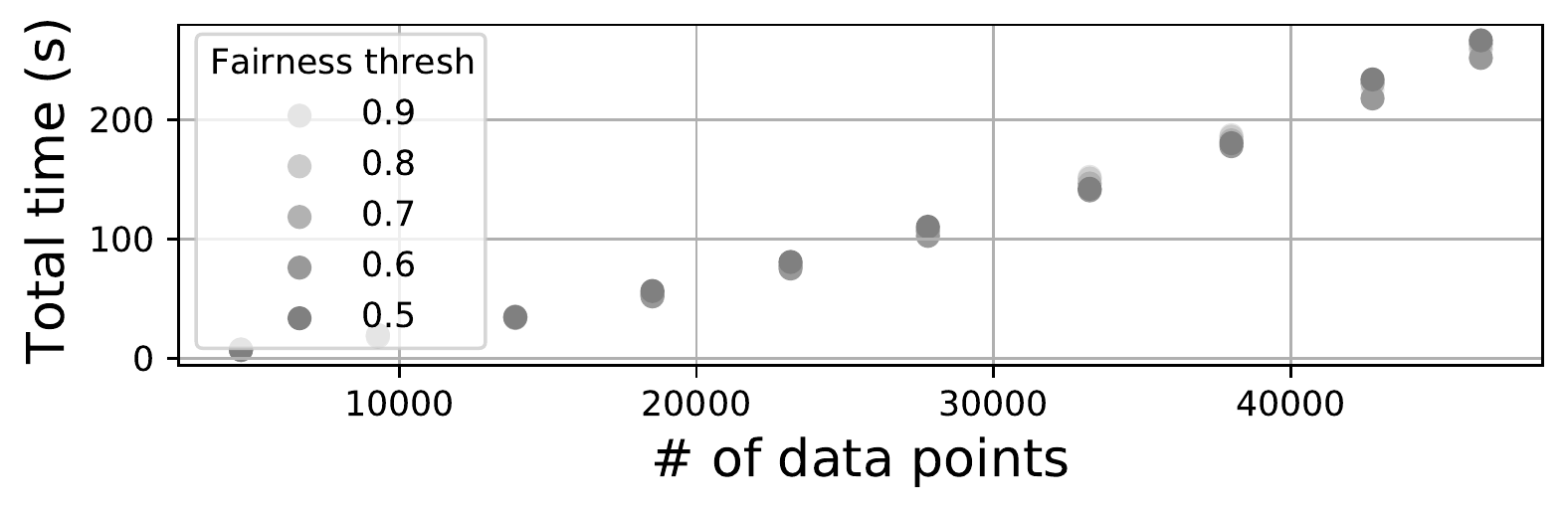}
    \caption{The total running time of FairRepair for datasets with increasing number of points and different sensitive groups (decision tree, adult dataset).}
    \label{fig:rq3-dataset-size}
\end{figure}

\begin{figure}[t]
    \centering
    \includegraphics[width=\columnwidth]{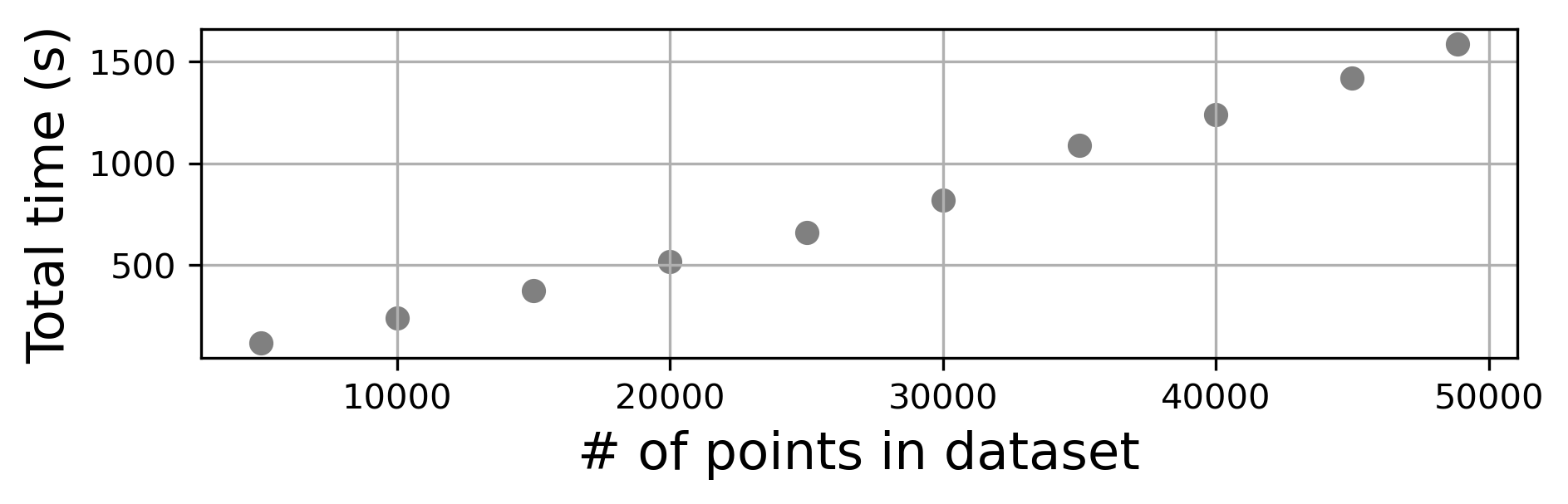}
    \caption{The total running time of FairRepair for datasets with increasing number of points and different fairness thresholds (random forest, adult dataset).}
    \label{fig:rq3-forest}
\end{figure}
\begin{figure}[t]
    \centering
    \includegraphics[width=0.95\columnwidth]{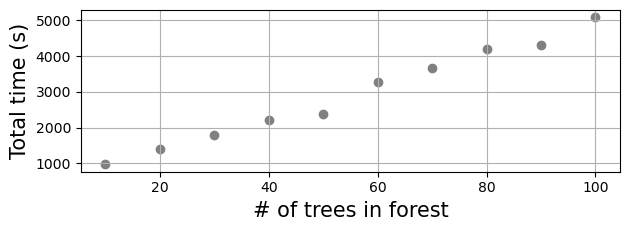}
    \caption{The total running time of FairRepair for random forests on the Adult dataset with increasing forest size.}
    \label{fig:rq3-forest-size}
\end{figure}

\textbf{RQ4: Syntactic Difference.}
In our algorithm, no path hypercubes are removed or merged during the repair. Hence the syntactic change can simply be defined as the (average) increase in the number of path hypercubes of the decision tree (or over the decision trees in the forest). In the results of decision trees for both datasets, we observe that no runs require refinement, i.e., the number of path hypercubes is unchanged. Hence we only discuss the results on random forests. We fixed the fairness threshold to be 0.8. The results for syntactic change against initial number of path hypercubes are shown in Figure  \ref{fig:rq4-num-hcubes}. As the initial number of path hypercubes increases, the syntactic change increases proportionally. For a random forest with average 30K path hypercubes in each decision tree, our repair increases the average number by less than 1,400
(less than ~5\% change).


\begin{figure}[t]
    \centering

    \includegraphics[width=\columnwidth]{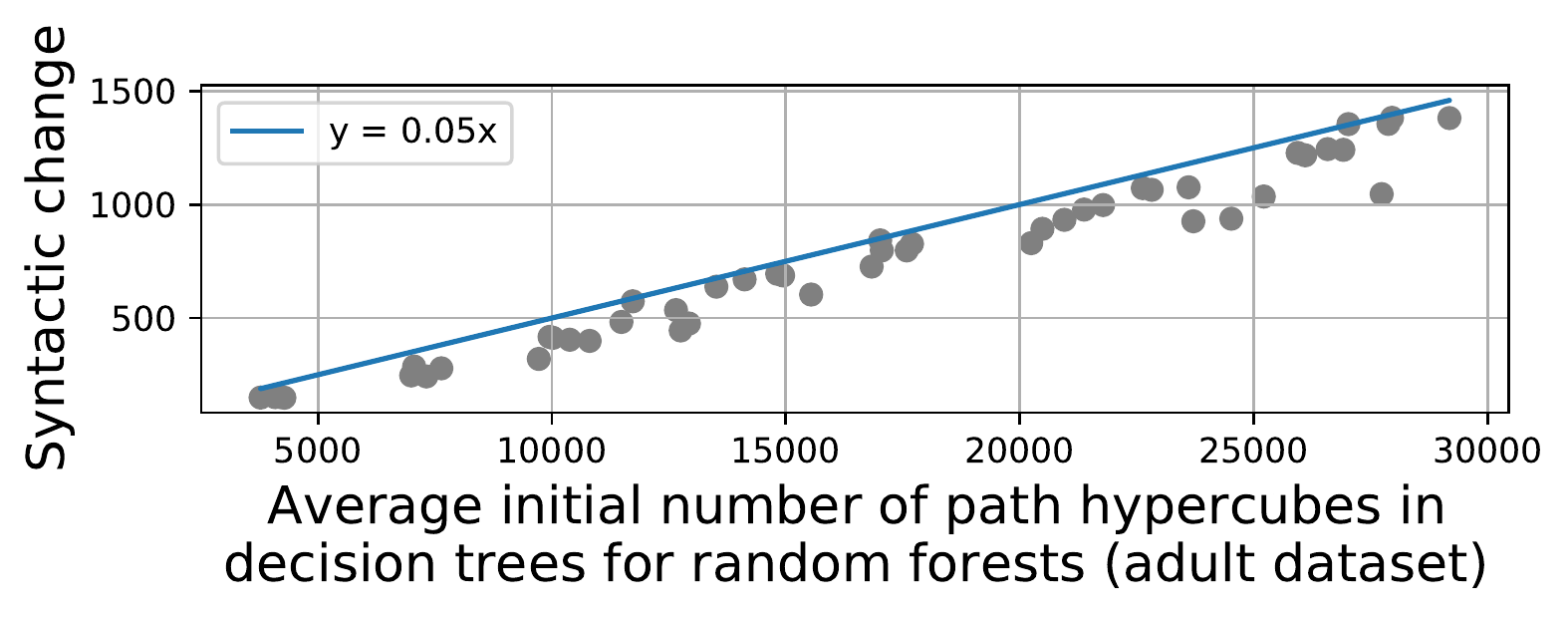}
    \caption{Syntactic change due to FairRepair for input random forests}
    \label{fig:rq4-num-hcubes}
\end{figure}


\textbf{RQ5: Accuracy.}
While achieving the fairness requirement, the change in the classification accuracy of the repaired models is limited. Figure \ref{fig:rq5-adult} shows the classification accuracy against various fairness thresholds on decision trees trained on the Adult dataset, with $\alpha=1.05$ and sensitive attribute set to \textit{sex} and \textit{race} respectively. 
The classification accuracy before repair was (on average) 96.3\% over 5 decision trees trained with different random seeds. %
For sensitive attribute \textit{race}, the accuracy at $c=0.95$ is 94.8\%, which only decreases by 1.5\% after the repair.
For sensitive attribute \textit{sex}, the accuracy at $c=0.95$ is 90.5\%. The initial passing rates computed from the predicted results are (on average) 31.0\% and 11.3\%. The initial model has a high accuracy, hence the unfairness in the dataset is highly preserved in the initial model. The passing rates computed directly from the Adult dataset are 30.9\% for male and 11.2\% for female, which is only one-third of males. To balance the passing rates, in particular, to increase the passing rate for females (and to decrease the passing rate for males), some female applicants (data points) who were initially classified with a $\mathtt{FALSE}$ prediction have to be given $\mathtt{TRUE}$ prediction now (and some male applicants who were initially classified with $\mathtt{TRUE}$ prediction are now given $\mathtt{FALSE}$ prediction). This causes the loss in accuracy. However, such loss is still bounded as shown by our results.
FairRepair is able to repair the decision trees, bounding the ratio of the passing rates within a fairness threshold of $c = 0.95$, while maintaining a 90.5\% accuracy.  The final passing rates are 30.9\% and 29.3\% for male and female applicants respectively.  

The results for decision trees on German dataset is shown in Figure \ref{fig:rq5-german}, where for sensitive attribute \textit{foreign worker}, the accuracy is 91.7\%, and for sensitive attribute \textit{sex and marital status}, the accuracy is 93.3\% at fairness threshold 0.95, compared to the initial accuracy 93.9\%. Such results show that besides keeping a small semantic distance from the original decision model, our algorithm is able to produce a repair that does not damage the classification accuracy. Results on random forests show a great similarity to Figure \ref{fig:rq5-adult} and \ref{fig:rq5-german}, hence are not displayed.

\begin{figure}[t]
    \centering
    \includegraphics[width=0.95\columnwidth]{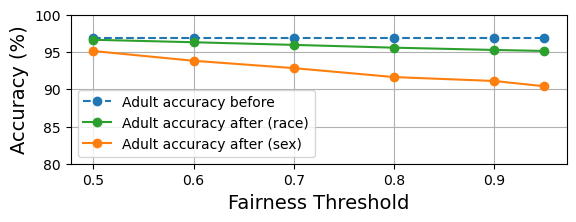}
    \caption{Classification accuracy of decision trees on Adult dataset on two sensitive attributes}
    \label{fig:rq5-adult}
\end{figure}

\begin{figure}[t]
    \centering
    \includegraphics[width=1.06\columnwidth]{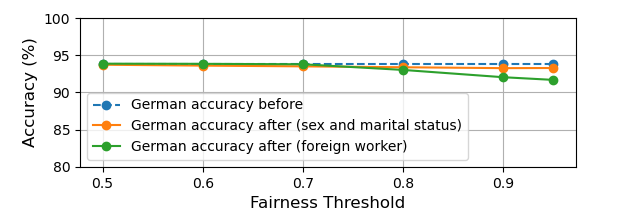}
    \caption{Classification accuracy of decision trees on German dataset on two sensitive attributes}
    \label{fig:rq5-german}
\end{figure}

\section{Related Work}
\label{related}

\paragraph{Fairness and its Testing / Analysis}
There has been an increased research effort on fairness in machine learning in recent years. Various fairness definitions have been proposed~ \cite{calders09, hardt16, kilbertus17, kusner17}. In our paper, we used group fairness, which is used in the fairness testing work of \cite{FairnessTestingESECFSE2017}. Other recent works on fairness testing includes black-box fairness testing~\cite{FSE19} and individual fairness testing~\cite{ASE18}.
There have also been attempts to verify fairness properties, including calculating integration with measure theoretical results \cite{OOPSLA17}, and using concentration inequalities to bound the error \cite{OOPSLA19}. Many approaches to train fairness classifiers have also been studied \cite{davies17, dwork12, fish16}, including modifying dataset fragments to remove sensitive attribute correlations \cite{fledman14}, and encoding fairness requirements into classifier training process \cite{hardt16}.
Fairness is a special form of probabilistic property. The study of probabilistic properties involves software engineering techniques, as studied in past works, including probabilistic symbolic execution \cite{geldenhuys12}\cite{luckow14}\cite{fliliri13}, probabilistic model checking \cite{vitaly03}\cite{kwiat10}\cite{clarke11}, and probabilistic program analysis / verification \cite{OOPSLA17}\cite{sampsons14}\cite{OOPSLA19}.

\paragraph{Automated Program Repair}
There is a large literature in the field of  automated program repair (e.g. \cite{CACM19} is a recent review of the research area). Automated repair techniques aim to produce patches to meet certain specifications such as passing a given test-suite. Heuristic repair, or search-based repair, iterates over a predefined search space of fixes, and checks the validity of the fixes (e.g. see \cite{weimer09}). To introduce higher flexibility than simple mutations, heuristics like plastic surgery hypothesis \cite{marks10} or competent programmer hypothesis are used to assume the patch can be produced from (i) other parts of the code \cite{ray16}, or (ii) guided by templates obtained from human patches \cite{kim13}, or (iii) via the use of learning to prioritize patch candidates \cite{prophet}.
In contrast to search-based or template-based works on program repair, our work is more related to constraint-based or semantic repair techniques \cite{hoang13,nopol,sergey16}. In these techniques a repair constraint is inferred as a specification via symbolic execution, and this specification is used to drive a program synthesis step which generates the patch. Previous works on semantic repair are mostly test-driven, where the repair attempts to pass a given test-suite.  In contrast, fairness is a hyperproperty \cite{hyperproperty} which is an accumulative property over collection of traces, and thus fairness repair is a fundamentally different problem from repairing a program based on given tests/assertions. 

\paragraph{Fairness in Decision Tree Learning}
Other than repairing a trained decision tree or random forest, attempts have been made to integrate fairness into the model training procedure \cite{aghaei}\cite{friedler18}. Among them, Kimaran et al. \cite{kamiran} also employ the technique of relabeling leaf nodes in decision trees to achieve fairness requirements. They adopted a different fairness definition that requires the difference (instead of the ratio) of the passing rates to be bounded. They compute the contribution to minimising the difference of flipping each leaf and use a greedy algorithm to flip the leaves accumulatively. Such definition restricts their algorithm to be applied to binary sensitive attributes only. Aghaei et al. \cite{aghaei} attempted to construct an optimization framework for learning optimal and fair decision trees, which aims to mitigate unfairness caused by both biased dataset and machine mis-classification. However, these works do not provide a solution for repairing an already trained unfair decision tree.

\paragraph{DIGITS \cite{CAV17}}
The work of \cite{CAV17} performs fairness repair guided by input population distributions. In contrast, our work is focused on rectifying an unfair decision tree from an unfair dataset given as the input. A dataset appear to be similar to an input probability distribution, but they represents two different methodologies. Our approach is able to cure unconscious biases in a decision making entity. With a dataset as input, we first construct a decision tree/random forest with a good fit, then produce a minimal repair to the model. The original models should closely capture the biases in the dataset. This allows us to detect and to cure the implicit and unconscious biases in the previous decision making procedure. The repaired model could be used as a guideline for making future decisions, and it also works as an explanation for the previous unfairness. In contrast, DIGITS repairs a model that already exists. The model is repaired regardless of its accuracy or whether it captures the original biases in the training dataset. Assuming the model closely resemble the original biases, the DIGITS method requires continuing sampling data points from the input distribution to achieve better convergence to the optimal repair. This requires more information of the population, compared to just the past decision results.

The completeness guarantee of DIGITS is relative to assumptions such as a manually provided sketch (describing repair model), which is not needed in FairRepair. While its algorithm is designed for binary sensitive attributes, ours is able to handle multiple sensitive groups. Our results are not directly comparable with DIGITS, although they adopted similar fairness criteria and bench-marked on the Adult dataset. Their decision trees were relatively small (less than 100 lines code) and employed at most three features. In contrast, our decision trees employ at least 10 features and scale to real-life applications, with 10K leaves (path hypercubes) before splits on sensitive groups. With sensitive attribute being \emph{sex} and fairness threshold set to 0.85, their algorithm was able to produce a repair in 10 minutes with semantic difference of 9.8\%. In contrast, FairRepair is able to achieve a semantic difference of 6.2\% with a higher fairness threshold 0.9 within 5 minutes. According to DIGITS synthesis algorithm, longer training time allows better convergence to the optimal repair, so it might find a repair that satisfies the fairness requirements at an early stage but far from the optimal. But the comparison does show the efficiency of our tool on both achieving fairness criteria and bounding semantic difference.


\section{Discussion}
\label{discussion}

 In this paper we described FairRepair, a process to repair an unfair decision tree to be fair relative to a dataset. Compared to existing approaches, FairRepair is more data-driven, does not require population distributions,  and makes fewer assumptions about the input dataset and the decision tree. At the same time, FairRepair provides useful guarantees about relative  completeness and semantic difference. We provide proofs of these guarantees, validate them experimentally, and evaluate FairRepair on two datasets that are used in the fairness literature. Our tool FairRepair has been submitted as {\em confidential supplementary material}, along with this paper submission. The tool will be released open-source for the wider research community, if and when our paper is published.

\paragraph{Decision Tree Learning Algorithm}
In this paper, we used scikit-learn \textit{DecisionTreeClassifier} to train the decision trees and random forests. However, our algorithm is generic and not limited to any particular tree learning algorithms. We extract the path hypercubes of the decision trees at the very beginning of the repair procedure, and did not use the tree structure in the later steps. Since the decision tree paths and path hypercubes have a one-to-one correspondence, there is not 
restriction on the choice of the tree classifier.

\paragraph{Accuracy of Models}
The main task of our algorithm is to repair a decision tree or a random forest such that it satisfies a probabilistic group fairness requirement. In this procedure, accuracy of the models would be affected, since we modified the classification of the nodes. It is desirable to have a repaired model that is also accurate. However, it is not our first priority to maintain the accuracy, but to repair the model with minimal change. 
Instead, we proposed the notion of semantic change to measure the difference between the original and the repair model. Similar to maximising accuracy, our algorithm aims to bound the semantic difference. 

There are accuracy-based fairness definitions, like \textit{equal opportunity} which requires the false negative rates of the sensitive groups to be similar, and \textit{overall accuracy equality} which requires the accuracy over all sensitive groups to be similar. Such fairness notions should be considered independently from group fairness.

\paragraph{Beyond Group Fairness}
Our approach focuses on group fairness~\cite{calders09}, which requires members of the sensitive groups to be classified at approximately the same rate. There are, however, many other fairness notions such as \emph{causal fairness} or 
\emph{equal opportunity fairness}~\cite{hardt16}. We provide the full algorithms including pre-processing and SMT encoding of repairing decision trees with respect to these two fairness definitions in Appendix \ref{other-defs}.

One can thus look into domain specific language (DSL) designs that can capture different notions of fairness, and integrate them with the repair process in FairRepair.

\paragraph{Qualification Fairness}
In our algorithm, we repaired the models with respect to group fairness (or statistical parity), which requires the passing rates of each sensitive group to be similar, without considering the features of the datapoints that received the desired outcome. In \textit{conditional statistical parity}, the passing rates are computed over \textit{controlled} subgroups of the sensitive groups. One typical control is \textit{qualification}. In our loan example, \textit{qualified group fairness} would require the proportions of candidates that received a loan among \textit{qualified} males and females be similar. Our problem formulation does not consider qualification, since this requires specifying the attributes that are legitimate to be used in making decisions. This is not the main focus of our research, which makes it possible that more qualified applicants do not receive the desired outcome, while the less qualified do.

\paragraph{Decision Trees and Beyond}
We believe that our repair approach is applicable to other types of models whose behavior on an input can be symbolically expressed in a closed form, such as a set of constraints for a decision tree, random forest, or an SVM model. We implemented our approach, however, only for decision trees. Additionally, by relying on sci-kit learn, we adopted several limitations, such as the need for a OneHot encoding of categorical attributes. We hope to explore other model variants in our future work.
One interesting feature of FairRepair is that it explicitly uses the sensitive attributes in the output decision tree. This is partly a dependency of our approach on decision trees, which needs to be investigated further for other models.



\paragraph{Beyond fairness.} Fairness is an example of a hyper-property over collections of traces \cite{hyperproperty}. There are other types of hyper-properties of interest such as robustness, side-channel, non-interference. These can all be expressed as hyper-properties over a set of program traces. At a high level, our approach indicates the promise of hyper-property driven automated repair. This raises the prospect of future generation program repair systems being systematically supported by SMT solver back-ends with a hyper-property plugin. 



\begin{appendices}

\section{Other Fairness Definitions}
\label{other-defs}
The algorithm in Section \ref{main algo} explains how we can repair a decision tree or a random forest with respect to group fairness, or statistical parity. There are some other fairness notions raised in the community. In particular, we are interested in \textit{fairness definitions based on similarity}, and \textit{fairness definitions based on precision}. 

\subsection{Causal Discrimination}
We shall still explain the definitions in the context of a binary decision making problem. One representative of similarity-based fairness measure is \textit{causal fairness}, which is adapted from \textit{causal discrimination score} defined in \cite{FairnessTestingESECFSE2017}. It requires applicants who have similar attributes to receive the same outcome. In particular, if two applicants only differ in the sensitive attributes, they should both be classified as 0 or 1. If they receive different outcomes, we refer them as experienced \textit{individual unfairness}. The causal discrimination score $S$ is measured by the size of the proportion (as a ratio between 0 and 1) among input population, which contains individuals that by only changing its sensitive attributes, the decision outcome also changes. A \textit{fairness threshold} $c$ for causal fairness is a real number between 0 and 1, such that the decision making model is called \textit{causally fair} if $S \leq c$. 

To repair a decision tree $T$ with respect to causal fairness, we start with pre-processing $T$. We first collect the path hypercubes of $T$. Let $\mathcal{H}:=\{H_1,\dots,H_k\}$ denote the set of path hypercubes of $T$, and let $\mathcal{D}$ denote the input dataset. Note that $\mathcal{H}$ is a partition of the input space. For each datapoint $d\in\mathcal{D}$, there exists a unique $H_i\in\mathcal{H}$ such that $d\in H_i$. Let $\mathcal{M}:\mathcal{D} \to \{1,2,\dots,k\}$ be a mapping such that $H_{\mathcal{M}(d)}$ is this unique path  hypercube. We use $H_{\mathcal{M}(d)}(d)$ to denote the predicted outcome of $d$, i.e., $H_{\mathcal{M}(d)}(d) = T(d)$. Let $\mathcal{A}$ be the sensitive attribute. There can be multiple sensitive attributes. Here we use a single sensitive attribute for illustration. Note that we force $\mathcal{A}$ to be categorical. Let $M:=\{A_1,\dots,A_m\}$ be the set of sensitive groups, where each $A_i$ is a valuation of $\mathcal{A}$. We group datapoints in $\mathcal{D}$ based on their non-sensitive attributes. Two datapoints $d_1$ and $d_2$ are in the same group $D$ if and only if they differ only in the sensitive attribute. We abuse the notation by letting $\mathcal{D}=\{D_1,\dots,D_l\}$. For each $D\in \mathcal{D}$, if 
\begin{align*}
   \mathcal{F}(D) &= \bigvee\limits_{d,d^{'}\in D} H_{\mathcal{M}(d)}(d)\neq H_{\mathcal{M}(d^{'})}(d^{'}) \\
   &= \exists d,d^{'}\in D.  H_{\mathcal{M}(d)}(d)\neq H_{\mathcal{M}(d^{'})}(d^{'}) 
\end{align*}
holds, then $D$ contributes to the causal discrimination. In particular, \[S = \dfrac{\sum\limits_{D\in\mathcal{D},D\models \mathcal{F}}\left\lvert D\right\rvert}{\left\lvert\mathcal{D}\right\rvert}.\]

We next construct the SMT formulas. For each path hypercube $H_i$, assign a pseudo-Boolean SMT variable $X_i$ ($X_i\in\{0,1\}$, representing a Boolean value). For points $d \in \mathcal{D}$, we use $X_{\mathcal{M}(d)}$ to denote the Boolean variables assigned to path hypercube $H_{\mathcal{M}(d)}$. Note that $X_{\mathcal{M}(d)}$ and $X_{\mathcal{M}(d^{'})}$ may refer to the same Boolean variable $X_i$ for $d\neq d^{'}$ if $\mathcal{M}(d)=\mathcal{M}(d^{'}) = i$. Substituting $X_i$'s into  expression $\mathcal{F}(D)$, we get $\bigvee\limits_{d,d^{'}\in D} X_{\mathcal{M}(d)}\neq X_{\mathcal{M}(d^{'})}$, which represents that at least two points $d,d^{'} \in D$ have different outcomes. Thus the fairness requirement is formulated as:
\[
\dfrac{\sum\limits_{D\in\mathcal{D}}\bigvee\limits_{d,d^{'}\in D} X_{\mathcal{M}(d)}\neq X_{\mathcal{M}(d^{'})}\cdot\left\lvert D\right\rvert}{\left\lvert\mathcal{D}\right\rvert} \leq c,
\]
where the numerator of the left hand side represents datapoints that experience individual unfairness. By dividing the size of the dataset, the fraction must be bounded by the fairness threshold $c$.

Similar to group fairness, we want to account for the semantic change as well. We first compute the initial causal discrimination score $S$. If $S>c$, then the minimal (theoretical) semantic change is $S-c$. This value is the minimal proportion in the input population that have to be assigned an opposite outcome to achieve causal fairness. (If $S\leq c$ then the decision tree is causally fair and needs no repair.) We abuse the notation $|H_i|$ to represent the number of datapoints reside in $H_i$, and use $H_i$ to also represent the initial return value of the hypercube. With $\alpha$ being the semantic difference bound, the semantic change is encoded as follows:
\[
\dfrac{\sum\limits_{H_i\in\mathcal{H}}\mathrm{Xor}(H_i,X_i)\cdot\left\lvert H_i\right\rvert}{\left\lvert\mathcal{D}\right\rvert} \leq \alpha\cdot(S-c)
\]

\subsection{Fairness by comparing predicted and actual outcomes}

The fairness definitions listed in this section compare the predicted outcome and actual outcome of datapoints. Before we proceed to the definitions, we denote $T(d)$ as the predicted outcome of datapoint $d$, and denote $Y(d)$ as the actual outcome of $d$. We shall omit $d$ and only write $T$ and $Y$ when $d$ is clear or not specified. For illustration, we consider the sensitive groups to be $M=\{\mathrm{male},\mathrm{female}\}$. In general, $M=\{A_1,\dots,A_m\}$, where $m$ is the number of sensitive groups in the decision making scenario.

The definitions in this section uses statistical metrics like false negative and positive predictive value. We assume the readers are familiar with these terms. 

\begin{definition}[Equal Opportunity]
A decision making model satisfies \textit{equal opportunity} if each sensitive group has similar false negative rate. In particular, it requires that $P(T=0|Y=1,\mathrm{male})=P(T=0|Y=1,\mathrm{female})$. The fractions of input population having actual outcome 1 that are classified to be 0 should be similar among each sensitive groups. 
\end{definition}

In practice, it is unlikely that the false negative rates of each sensitive group are all equal. Hence we define a fairness threshold $c$ to be a real value between 0 and 1, such that if for any two sensitive groups $A_i$ and $A_j$, $P(T=0|Y=1,A_i)\geq c\cdot P(T=0|Y=1,A_j)$, then the decision making model satisfies equal opportunity.

Note that the measurement of fairness for equal opportunity is similar to group fairness, with passing rates replaced with false negative rates. We shall not go through the detailed pre-processing steps again. Assume we have obtained the path hypercubes split based on the sensitive groups. Let $S_i, 1\leq i\leq m$ be the set of path hypercubes for each sensitive group $A_i$. Each $S_i$ contains path hypercubes $H_{i,j}$'s where $j$ indexes over the path hypercubes in the sensitive group $A_i$. The setting is the same as that of group fairness. 

To repair a decision tree $T$ with respect to predictive parity, we first compute the false negative rates for each sensitive group, and compute the minimal (theoretical) semantic change required to repair the tree. For sensitive group $A_i$, the false negative rate $F_i$ is as follows:
\begin{align*}
F_i &= P(T=0|Y=1,A_i) \\
&= \frac{\left\lvert\{d\in\mathcal{D}:d \text{ is in } A_i, T(d) = 0 \text{ and }Y(d) = 1\}\right\rvert}{\left\lvert\{d\in\mathcal{D}:d \text{ is in } A_i \text{ and } Y(d) = 1\}\right\rvert}.    
\end{align*}

Let $r_i$ denote $\left\lvert\{d\in\mathcal{D}:d \text{ is in } A_i, T(d) = Y(d) = 1\}\right\rvert$. We want to compute $x_i$ for each $r_i$ that is the false negative rate after repair We construct the following linear optimisation problem to compute the minimal semantic change, where $m$ is the number of sensitive groups. 
\[
\begin{cases} 
\forall \,1 \leq i,j\leq m, F_i = \dfrac{x_i}{\left\lvert\{d\in\mathcal{D}:d \text{ is in } A_i \text{ and } Y(d) = 1\}\right\rvert},\\
\forall \,1 \leq i,j\leq m, \dfrac{F_i }{F_j} \geq c , \\
\forall \,1 \leq i\leq m, 0 \leq F_i \leq 1, \\
\text{minimise } \sum\limits_{i=1}^{M}|x_i-r_i|.
\end{cases}
\]

We next proceed to construct the SMT formulas for fairness and semantic requirement. For simplicity, denote 
\[D_i:=\left\lvert\{d\in\mathcal{D}:d \text{ is in } A_i \text{ and } Y(d) = 1\}\right\rvert\]
to be the set of datapoints in sensitive group $A_i$ that have outcome 1. Similar to that in causal fairness repair, let $\mathcal{M}_i:D_i\to \{1,\dots,|S_i|\}$ be the function that maps each point $d\in D_i$ to the index of the unique path hypercube it resides in, i.e., $d\in H_{i,\mathcal{M}_i(d)}$. Assign a pseudo-Boolean SMT variable $X_{i,j}$ to each $H_{i,j}$. We abuse the notation to also allow $H_{i,j}$ to represent the Boolean outcome of the path hypercube $H_{i,j}$.

The false negative rate $F_i$ for $A_i$ is represented by:
\[
F_i = \frac{\sum\limits_{d\in D_i}\neg X_{i,j}}{|D_i|}.
\]
Note that, $X_{i,j}$'s are pseudo-Boolean and $\sum\limits_{d\in D_i}\neg X_{i,j}$ is a non-negative integer. The fairness requirement is thus encoded as:
\[
\forall 1\leq i,j\leq m, F_i \geq F_j.
\]
The semantic difference requirement is as follows:
\[
\sum\limits_{1\leq i\leq m}\sum\limits_{d\in D_i}\mathrm{Xor}(X_{i,\mathcal{M}(d)},H_{i,\mathcal{M}(d)})\leq \alpha\cdot\sum\limits_{i=1}^{M}|x_i-r_i|.
\]

The above is only one of the fairness definitions based on precision. Here we do not list all of them. The readers may find more for their own interests.






\end{appendices}

\bibliographystyle{acm}
\bibliography{references}

\begin{thebibliography}{10}

\bibitem{FSE19}
{\sc Aggarwal, A., Lohia, P., Nagar, S., Dey, K., and Saha, D.}
\newblock Blackbox fairness testing of machine learning models.
\newblock In {\em International Symposium on Foundations of Software
  Engineering (FSE)\/} (2019).

\bibitem{aghaei}
{\sc Aghaei, S., Azizi, M.~J., and Vayanos, P.}
\newblock Learning optimal and fair decision trees for non-discriminative
  decision-making.
\newblock In {\em The Thirty-Third {AAAI} Conference on Artificial
  Intelligence, {AAAI}\/} (2019), {AAAI} Press, pp.~1418--1426.

\bibitem{CAV17}
{\sc Albarghouthi, A., D'Antoni, L., and Drews, S.}
\newblock Repairing decision-making programs under uncertainty.
\newblock In {\em International Conference on Computer Aided Verification
  ({CAV})\/} (2017).

\bibitem{OOPSLA17}
{\sc Albarghouthi, A., D'Antoni, L., Drews, S., and Nori, A.}
\newblock Fairsquare: Probabilistic verification for program fairness.
\newblock In {\em International Symposium on Object-oriented Programming
  Systems, Languages and Applications ({OOPSLA})\/} (2017).

\bibitem{OOPSLA19}
{\sc Bastani, O., Zhang, X., and Solar-Lezama, A.}
\newblock Probabilistic verification of fairness properties via concentration.
\newblock In {\em International Symposium on Object-oriented Programming
  Systems, Languages and Applications ({OOPSLA})\/} (2019).

\bibitem{calders09}
{\sc {Calders}, T., {Kamiran}, F., and {Pechenizkiy}, M.}
\newblock Building classifiers with independency constraints.
\newblock In {\em IEEE International Conference on Data Mining Workshops\/}
  (2009), pp.~13--18.

\bibitem{clarke11}
{\sc Clarke, E., and Zuliani, P.}
\newblock Statistical model checking for cyber-physical systems.
\newblock In {\em Automated Technology for Verification and Analysis (ATVA)\/}
  (2011), Springer, pp.~1--12.

\bibitem{hyperproperty}
{\sc Clarkson, M.~R., and Schneider, F.~B.}
\newblock Hyperproperties.
\newblock {\em J. Comput. Secur. 18}, 6 (2010), 1157–1210.

\bibitem{davies17}
{\sc Corbett-Davies, S., Pierson, E., Feller, A., Goel, S., and Huq, A.}
\newblock Algorithmic decision making and the cost of fairness.
\newblock In {\em Proceedings of the 23rd ACM SIGKDD International Conference
  on Knowledge Discovery and Data Mining (KDD)\/} (2017), Association for
  Computing Machinery, p.~797–806.

\bibitem{dwork12}
{\sc Dwork, C., Hardt, M., Pitassi, T., Reingold, O., and Zemel, R.}
\newblock Fairness through awareness.
\newblock In {\em Proceedings of the 3rd Innovations in Theoretical Computer
  Science Conference (ITCS)\/} (2012), Association for Computing Machinery,
  p.~214–226.

\bibitem{fledman14}
{\sc Feldman, M., Friedler, S.~A., Moeller, J., Scheidegger, C., and
  Venkatasubramanian, S.}
\newblock Certifying and removing disparate impact.
\newblock In {\em Proceedings of the 21th ACM SIGKDD International Conference
  on Knowledge Discovery and Data Mining (KDD)\/} (2015), Association for
  Computing Machinery, p.~259–268.

\bibitem{fliliri13}
{\sc Filieri, A., P\u{a}s\u{a}reanu, C.~S., and Visser, W.}
\newblock Reliability analysis in symbolic pathfinder.
\newblock In {\em Proceedings of the 2013 International Conference on Software
  Engineering (ICSE)\/} (2013), IEEE Press, p.~622–631.

\bibitem{fish16}
{\sc Fish, B., Kun, J., and Lelkes, {\'{A}}.~D.}
\newblock A confidence-based approach for balancing fairness and accuracy.
\newblock In {\em Proceedings of the 2016 {SIAM} International Conference on
  Data Mining\/} (2016), {SIAM}, pp.~144--152.

\bibitem{friedler18}
{\sc Friedler, S.~A., Scheidegger, C., Venkatasubramanian, S., Choudhary, S.,
  Hamilton, E.~P., and Roth, D.}
\newblock A comparative study of fairness-enhancing interventions in machine
  learning.
\newblock In {\em Proceedings of the Conference on Fairness, Accountability,
  and Transparency (FAT)\/} (2019), Association for Computing Machinery,
  p.~329–338.

\bibitem{FairnessTestingESECFSE2017}
{\sc Galhotra, S., Brun, Y., and Meliou, A.}
\newblock Fairness testing: Testing software for discrimination.
\newblock In {\em Proceedings of the 11th Joint Meeting on Foundations of
  Software Engineering (ESEC/FSE)\/} (2017), p.~498–510.

\bibitem{geldenhuys12}
{\sc Geldenhuys, J., Dwyer, M., and Visser, W.}
\newblock Probabilistic symbolic execution.
\newblock {\em International Symposium on Software Testing and Analysis
  (ISSTA)\/} (2012), 166--176.

\bibitem{hardt16}
{\sc Hardt, M., Price, E., and Srebro, N.}
\newblock Equality of opportunity in supervised learning.
\newblock In {\em Proceedings of the 30th International Conference on Neural
  Information Processing Systems (NIPS)\/} (2016), Curran Associates Inc.,
  p.~3323–3331.

\bibitem{marks10}
{\sc Harman, M.}
\newblock Automated patching techniques: the fix is in: technical perspective.
\newblock {\em Commun. ACM 53\/} (01 2010), 108.

\bibitem{kamiran}
{\sc Kamiran, F., Calders, T., and Pechenizkiy, M.}
\newblock Discrimination aware decision tree learning.
\newblock In {\em Proceedings of the IEEE International Conference on Data
  Mining (ICDM)\/} (2010), IEEE Computer Society, p.~869–874.

\bibitem{kilbertus17}
{\sc Kilbertus, N., Rojas-Carulla, M., Parascandolo, G., Hardt, M., Janzing,
  D., and Sch\"{o}lkopf, B.}
\newblock Avoiding discrimination through causal reasoning.
\newblock In {\em Proceedings of the 31st International Conference on Neural
  Information Processing Systems (NIPS)\/} (2017), Curran Associates Inc.,
  p.~656–666.

\bibitem{kim13}
{\sc Kim, D., Nam, J., and Song, J.}
\newblock Automatic patch generation learned from human-written patches.
\newblock pp.~802--811.

\bibitem{kusner17}
{\sc Kusner, M., Loftus, J., Russell, C., and Silva, R.}
\newblock Counterfactual fairness.
\newblock In {\em Proceedings of the 31st International Conference on Neural
  Information Processing Systems (NIPS)\/} (2017), Curran Associates Inc.,
  p.~4069–4079.

\bibitem{kwiat10}
{\sc {Kwiatkowska}, M., {Norman}, G., and {Parker}, D.}
\newblock Advances and challenges of probabilistic model checking.
\newblock In {\em 2010 48th Annual Allerton Conference on Communication,
  Control, and Computing (Allerton)\/} (2010), pp.~1691--1698.

\bibitem{CACM19}
{\sc {Le Goues}, C., Pradel, M., and Roychoudhury, A.}
\newblock Automated program repair.
\newblock {\em Communications of the {ACM} 62}, 12 (2019).

\bibitem{prophet}
{\sc Long, F., and Rinard, M.}
\newblock Automatic patch generation by learning correct code.
\newblock In {\em ACM SIGPLAN Symposium on Principles of Programming Languages
  ({POPL})\/} (2016).

\bibitem{luckow14}
{\sc Luckow, K., Păsăreanu, C., Dwyer, M., Filieri, A., and Visser, W.}
\newblock Exact and approximate probabilistic symbolic execution for
  nondeterministic programs.
\newblock In {\em Proceedings of the 29th ACM/IEEE International Conference on
  Automated Software Engineering (ASE)\/} (2014), pp.~575--586.

\bibitem{sergey16}
{\sc Mechtaev, S., Yi, J., and Roychoudhury, A.}
\newblock Angelix: Scalable multiline program patch synthesis via symbolic
  analysis.
\newblock In {\em Proceedings of the 38th International Conference on Software
  Engineering (ICSE)\/} (2016), Association for Computing Machinery,
  p.~691–701.

\bibitem{hoang13}
{\sc Nguyen, H., Qi, D., Roychoudhury, A., and Chandra, S.}
\newblock Semfix: Program repair via semantic analysis.
\newblock In {\em Proceedings of the International Conference on Software
  Engineering (ICSE)\/} (2013), pp.~772--781.

\bibitem{scikit-learn}
{\sc Pedregosa, F., Varoquaux, G., Gramfort, A., Michel, V., Thirion, B.,
  Grisel, O., Blondel, M., Prettenhofer, P., Weiss, R., Dubourg, V.,
  Vanderplas, J., Passos, A., Cournapeau, D., Brucher, M., Perrot, M., and
  Duchesnay, E.}
\newblock Scikit-learn: Machine learning in {P}ython.
\newblock {\em Journal of Machine Learning Research 12\/} (2011), 2825--2830.

\bibitem{ray16}
{\sc Ray, B., Hellendoorn, V., Godhane, S., Tu, Z., Bacchelli, A., and Devanbu,
  P.}
\newblock On the "naturalness" of buggy code.
\newblock pp.~428--439.

\bibitem{sampsons14}
{\sc Sampson, A., Panchekha, P., Mytkowicz, T., McKinley, K., Grossman, D., and
  Ceze, L.}
\newblock Expressing and verifying probabilistic assertions.
\newblock {\em ACM SIGPLAN Notices 49\/} (2014).

\bibitem{vitaly03}
{\sc Shmatikov, V.}
\newblock Probabilistic model checking of an anonymity system.
\newblock {\em Journal of Computer Security 12\/} (2004).

\bibitem{ASE18}
{\sc Udeshi, S., Arora, P., and Chattopadhyay, S.}
\newblock Automated directed fairness testing.
\newblock In {\em International Conference on Automated Software Engineering
  (ASE)\/} (2018).

\bibitem{weimer09}
{\sc Weimer, W., Nguyen, T., Le~Goues, C., and Forrest, S.}
\newblock Automatically finding patches using genetic programming.
\newblock In {\em Proceedings of the 31st International Conference on Software
  Engineering (ICSE)\/} (2009), IEEE Computer Society, p.~364–374.

\bibitem{nopol}
{\sc Xuan, J., Martinez, M., Demarco, F., Clement, M., Marcote, S.~L., Durieux,
  T., Berre, D.~L., and Monperrus, M.}
\newblock Nopol: Automatic repair of conditional statement bugs in java
  programs.

\end{thebibliography}

\end{document}